\documentclass[onecolumn]{fairmeta}

\usepackage{amsmath}
\usepackage{amssymb}
\usepackage{amsfonts}
\usepackage{amsthm}
\usepackage{bbm}
\usepackage{wrapfig}
\usepackage{algorithm}
\usepackage{algpseudocode}
\usepackage{url}

\usepackage{amsmath,amsfonts,bm}

\def\eqref#1{equation~\ref{#1}}
\def\1{\bm{1}}

\DeclareMathAlphabet{\mathsfit}{\encodingdefault}{\sfdefault}{m}{sl}
\SetMathAlphabet{\mathsfit}{bold}{\encodingdefault}{\sfdefault}{bx}{n}

\theoremstyle{plain}
\newtheorem{theorem}{Theorem}

\newtheorem{proposition}[theorem]{Proposition}
\newtheorem{corollary}[theorem]{Corollary}
\theoremstyle{definition}

\usepackage{caption}

\title{MOSS: Mode-Oriented Source Shaping for Robot Flow Matching Policies}

\author[1,2]{Zirun Ma\textsuperscript{*}}
\author[1]{Jindou Jia\textsuperscript{*}}
\author[1]{Tuo An}
\author[1]{Jiaqi Bai}
\author[1]{Jiahua Dong}
\author[1]{Jianfei Yang\textsuperscript{$\dagger$}}

\affiliation[1]{MARS Lab, Nanyang Technological University}
\affiliation[2]{Tsinghua University
    \newline
    \textsuperscript{*} Equal contribution,
    \textsuperscript{$\dagger$} Corresponding author.
}

\correspondence{Jianfei Yang (\email{jianfei.yang@ntu.edu.sg}).}
\metadata[Project Page]{\url{https://mmmmazr.github.io/moss}}

\abstract{
In robotic manipulation, the same observation often admits multiple feasible actions. Generative policies such as diffusion and flow matching naturally capture such multimodal behaviors, yet which mode is executed is left to random sampling and cannot be actively controlled. We propose \textbf{M}ode-\textbf{O}riented \textbf{S}ource \textbf{S}haping (\textbf{MOSS}), which turns mode selection into an explicit choice of where generation starts. We first reveal, both empirically and theoretically, that deterministic flow transport induces mode-dependent structure in the source space. Under ideal deterministic ODE transport, distinct behavioral modes have distinct source-space preimages, while practical approximation errors can blur their boundaries due to imbalanced data, limited model capacity, and few inference steps. Motivated by this source-space structure, MOSS constructs semantically aligned mode-specific \textit{Gaussian} sources that encode mode identity in the initial noise. A modality-strength estimator predicts the local degree of multimodality, which adaptively controls the geometry of the mode-specific sources. It contracts them toward a shared \textit{Gaussian} where behaviors are similar and separates them near behavioral branching points. During deployment, the desired behavior is selected at inference simply by sampling from the corresponding component. Across $6$ simulated and $3$ real-world manipulation tasks, MOSS steers to the target mode up to $100\%$ of the time with high task success, recovers minority modes at over $90\%$ guidance rate under a $10\!:\!90$ imbalance dataset, and surpasses mode-conditioned flow policy by $43.3$ points of guidance rate on real robots. 
}

\begin{document}
\maketitle

\section{Introduction} 

Generative models have recently become a prominent paradigm in robot imitation learning~\citep{pearce2023imitating, pi07}. Unlike conventional policies that regress toward an optimal action, representative approaches such as Diffusion Policy (DP)~\citep{Chi-RSS-23} and Flow Matching (FM)~\citep{lipman2023flow,bai2026flash} generate action trajectories from random noise, allowing them to represent multiple feasible behaviors under the same observation, i.e., \textit{multimodal} behaviors~\citep{zhai2025vfp}. Here, a \textit{behavior mode} denotes a semantically coherent family of action sequences that follow the same qualitative strategy for accomplishing the task~\citep{Chi-RSS-23}. For instance, a robot may approach the same object from the left or the right, or interact with different targets in different orders, while still completing the task successfully. However, in existing generative policies, which mode is ultimately executed is largely determined implicitly by random sampling~\citep{lipman2023flow, zhang2026slfm}. This limits their applicability in settings that require user-specified preferences, test-time constraints, or human intervention~\citep{zoellner2026temporal, he2025uncertainty}.

Existing solutions fall into two categories. The straightforward one explicitly conditions the generative model on a mode indicator, in practice most often a language instruction, so that different conditions induce different action distributions~\citep{instructvla, zhang2026affordance, rdt1b}. Such conditioning, however, often fails to steer the policy reliably when the training data are imbalanced~\citep{fang2026visionoverrideslanguageevaluating}, since high-dimensional visual scenes are strongly correlated with task execution, the model tends to learn scene-action shortcuts that bypass the weaker, low-dimensional language condition~\citep{ren2026disc}. The other line instead operates on the source distribution of FM, controlling multimodal behaviors by implicitly learning a set of \textit{Gaussian} priors~\citep{zhang2026slfm, machado2026lafm}. These priors, however, carry no explicit semantic correspondence to behavioral modes, so aligning them with user-defined modes still demands additional networks and laborious post-alignment~\citep{zhang2026slfm}.

Toward a more reliable handle on mode selection, we first analyze how a well-trained FM policy behaves on multimodal tasks. We find that, even with a standard isotropic \textit{Gaussian} source, the final behavior does not arise randomly from the entire noise space. Under deterministic Ordinary Differential Equation (ODE) transport, different behavioral modes tend to occupy distinct, albeit noticeably entangled, regions of the source space. We further show theoretically that, under ideal transport conditions, such mode-specific initial-noise regions are guaranteed to be implicitly separated. In practice, however, imbalanced mode distributions, limited model capacity, and a restricted number of inference steps leave these partitions interleaved, weakly separated, or severely imbalanced, sometimes resulting in mode preference or even mode collapse~\citep{zhai2025vfp,mazza2026understanding}. This raises a natural question: \textit{can we make the existing partitioning more explicit, and steer behavior directly through the initial distribution?}

Based on this insight, we propose \textbf{M}ode-\textbf{O}riented \textbf{S}ource \textbf{S}haping, termed \textbf{MOSS}, which enables controllable multimodal generation by dynamically shaping the source distribution of FM, as shown in Figure~\ref{fig:teaser}. Specifically, we first construct semantically defined mode-specific \textit{Gaussian} sources, explicitly encoding mode identity into the initial noise and making the resulting control more robust to imbalanced mode distributions, limited model capacity, and limited inference steps. Since multiple modes typically arise only during certain critical phases rather than throughout the entire task~\citep{jia2026marspolicy}, a modality-strength estimator then predicts the local degree of multimodality from previous observation and action history, and uses it to continuously modulate the separability of the \textit{Gaussian} components. When demonstrations from different modes behave alike, the components contract toward a shared \textit{Gaussian} and maximize data sharing. As the robot approaches a behavioral branch, they gradually separate and give the initial noise a clear, stable mode identity. 

% \begin{figure}
%     \centering
%     \vspace{-10pt}
%     \includegraphics[width=1.0\linewidth]{figure/teaser.png}
%     \vspace{-6pt}
%     \caption{\textbf{Overview of MOSS.} \textbf{(a)} A real-world \textit{open-drawer} task whose shared approach trajectory branches into two feasible modes. \textbf{(b)} Evolution of the initial noise. Standard FM starts from one isotropic \textit{Gaussian} and separates modes only implicitly. MOSS shapes the source by the estimated modality strength, merging the components where the modes behave alike and separating them at branching. \textbf{(c)} Architecture of MOSS. The modality-strength estimator sets the source separation, while a single flow network is shared across modes.}
%     \vspace{-15pt}
%     \label{fig:teaser}
% \end{figure}

\begin{figure*}[t]
    \centering
    \includegraphics[width=\textwidth]{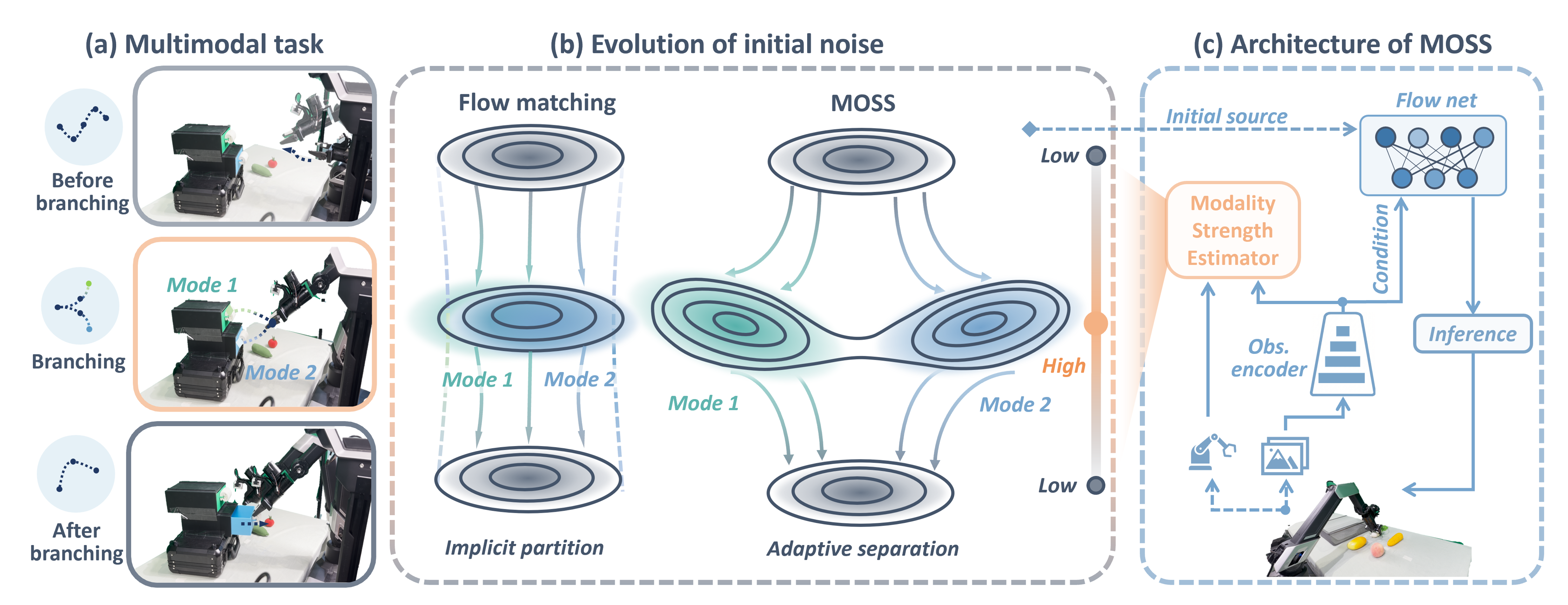}
    \caption{\textbf{Overview of MOSS.} \textbf{(a)} A real-world \textit{open-drawer} task whose shared approach trajectory branches into two feasible modes. \textbf{(b)} Evolution of the initial noise. Standard FM starts from one isotropic \textit{Gaussian} and separates modes only implicitly. MOSS shapes the source by the estimated modality strength, merging the components where the modes behave alike and separating them at branching. \textbf{(c)} Architecture of MOSS. The modality-strength estimator sets the source separation, while a single flow network is shared across modes.}
    \label{fig:teaser}
\end{figure*}

MOSS provides several practical advantages beyond explicit mode selection. First, when different modes in the training demonstrations are highly imbalanced, the generation probabilities of standard FM tend to inherit the empirical data distribution. In contrast, our method can directly adjust the deployment probability of each mode by changing the source sampling distribution. Second, for data-scarce modes, we find that the proposed strategy can effectively leverage shared trajectory data from other modes, allowing the policy to learn more stable common motion structures while preserving clear mode boundaries, thereby improving both sample efficiency and task success for minority modes. Finally, compared with methods that apply strong guidance to shift the learned vector field at inference time, mode-specific sources still initialize transport from regions already established during training. This can potentially reduce trajectory-quality degradation caused by inference-time guidance~\citep{yu2026conflict}, while naturally extending to currently emerging one-step denoising algorithms that do not support guidance~\citep{ho2022cfg, jia2026a2a}.

We systematically evaluate our method on multiple robotic manipulation tasks with multimodal structure, including $6$ simulation tasks and $3$ real-world robot tasks. Compared with standard FM~\citep{lipman2023flow}, explicit mode-conditioned FM~\citep{lei2026primaryfine}, and source-based methods~\citep{zhang2026slfm, luo2026mmfm}, MOSS achieves up to $100\%$ target-mode steering while maintaining high task success rate (SR). Even under a highly imbalanced mode distribution ($10:90$), our method can reliably recover the minority mode, achieving over $90\%$ guidance rate (GR). Ablations show that adaptive source geometry improves GR over fixed \textit{Gaussian} separation by $12$-$14$ percentage points in average GR at the same training budget. On real-world tasks, MOSS achieves an average GR/SR of $84.2\%/85.3\%$, outperforming mode-conditioned FM by $43.3$ and $12.0$ percentage points in average GR and SR, respectively. These results validate the effectiveness of source-space control under visual noise, limited demonstrations, and execution errors.

\section{Related work}

\paragraph{Multimodal generative policies.}
Generative policies such as diffusion~\citep{Chi-RSS-23} and FM~\citep{lipman2023flow} naturally model multimodal action distributions through stochastic generation. However, different behavioral modes can still become entangled or collapse during generation~\citep{zhai2025vfp,mazza2026understanding}. Recent methods improve multimodal modeling through additional structure. VFP~\citep{zhai2025vfp} combines a variational latent prior, optimal transport, and a MoE decoder to better separate action modes, while MARS~\citep{jia2026marspolicy} adaptively switches between stochastic and deterministic generation according to multimodality degree. None of these methods can actively control multimodality.

\paragraph{Source shaping for flow matching.}
Standard FM-based policies~\citep{lipman2023flow} commonly initialize transport from a simple \textit{Gaussian} source. Recent works instead design the source distribution to improve efficiency or structural alignment. A2A~\citep{jia2026a2a} initializes flow from action-history representations to enable shorter transport and one-step inference; VITA~\citep{gao2025vita} uses visual latents as the flow source; and LAFM~\citep{machado2026lafm} selects learned source priors according to latent actions. These methods primarily target efficient or better-aligned transport rather than explicit mode-level control. More recently, SL-FM~\citep{zhang2026slfm} separates behavioral branches through orthogonal source lifting while retaining a shared velocity field. MOSS similarly controls behavior through the source space, but explicitly aligns source components with semantic behavior modes and adaptively adjusts their separation according to local multimodality.

\paragraph{Explicit conditioning for behavioral control.}
Explicit conditioning provides a direct mechanism for controlling multimodal action generation by injecting the desired behavioral intent into the policy. Large VLA models such as RDT and $\pi_{0.5}$~\citep{rdt1b,pi07,pi05} use language and visual-semantic conditions to guide action generation. Lightweight generative policies can instead introduce explicit discrete mode variables; for example, PF-DAG~\citep{lei2026primaryfine} predicts a primary mode and generates continuous actions with a mode-conditioned MeanFlow policy. While effective, such explicit conditioning may introduce shortcut solutions, where the policy relies heavily on the injected mode signal rather than organizing multimodal behaviors intrinsically. In contrast, MOSS controls behavior through the source distribution without exposing the mode variable to the policy itself, and our experiments show source-based mode control yields stronger controllability than directly injecting the mode variable into the velocity-field condition.

\section{Multimodal Property of Flow Matching}
\label{sec:analysis}

Standard FM learns a continuous transport from a simple source to a complex action distribution conditioned on an observation $\mathbf{o}\in\mathcal{O}$~\citep{lipman2023flow,jia2026a2a}. Let $d\in\mathbb{N}$ denote the dimension of the vectorized action chunk, $\mathbf{z}\in\mathbb{R}^d$ with $\mathbf{z}\sim\mu_0=\mathcal{N}(\mathbf{0}_d,\mathbf{I}_d)$ be the source noise, where $\mathbf{0}_d\in\mathbb{R}^d$ and $\mathbf{I}_d\in\mathbb{R}^{d\times d}$, and let $\mathbf{y}\in\mathbb{R}^d$ with $\mathbf{y}\sim p^\mathbf{o}$ denote the target action distribution. Consider the linear \textit{Gaussian} probability path
\begin{equation}
\mathbf{x}_t = t\mathbf{y}+\sigma_t \mathbf{z},
\qquad
\sigma_t = 1-(1-\sigma_{\min})t,
\qquad
\sigma_{\min}\ge 0,
\label{eq:fm_probability_path}
\end{equation}
where $\mathbf{x}_t\in\mathbb{R}^d$ and $t\in[0,1]$. At $t=1$, the probability-path endpoint is $\mathbf{x}_1=\mathbf{y}+\sigma_{\min}\mathbf{z}$. We write $p_1^\mathbf{o}$ for its distribution and $\Phi_\mathbf{o}:\mathbb{R}^d\to\mathbb{R}^d$ for the flow map induced by the ideal FM ODE from $\mathbf{x}_0=\mathbf{z}$ to $\mathbf{x}_1$.

Empirically, we observe that FM policies trained on multimodal tasks already exhibit a structured source space: initial noises leading to different behavioral modes tend to occupy distinct regions of the \textit{Gaussian} source, albeit noticeably entangled near their interface, as shown in Figure~\ref{fig:fm_noise_separation}. We next show that this phenomenon arises naturally from deterministic transport theoretically.

\subsection{Noise Zoning and Its Geometric Regularity}

Partition the action space $\mathbb{R}^d$ into pairwise disjoint open regions $\mathcal{A}_1^\mathbf{o},\dots,\mathcal{A}_K^\mathbf{o}\subset\mathbb{R}^d$ with semantic boundary $\mathcal{B}_A^\mathbf{o}\subset\mathbb{R}^d$, where $K\in\mathbb{N}$ denotes the number of behavioral modes and
$p_1^\mathbf{o}(\mathcal{B}_A^\mathbf{o})=0$. Let $h_\mathbf{o}:\mathbb{R}^d\setminus \mathcal{B}_A^\mathbf{o}\to\{1,\dots,K\}$ denote the mode assignment of an endpoint $\mathbf{x}\in \mathcal{A}_k^\mathbf{o}$, i.e., $h_\mathbf{o}(\mathbf{x})=k$ for $\mathbf{x}\in \mathcal{A}_k^\mathbf{o}$. The \emph{noise zone} of mode $k$ and the source-space boundary are the preimages
$\mathcal{Z}_k^\mathbf{o}=\Phi_\mathbf{o}^{-1}(\mathcal{A}_k^\mathbf{o})\subset\mathbb{R}^d$ and $\mathcal{B}_Z^\mathbf{o}=\Phi_\mathbf{o}^{-1}(\mathcal{B}_A^\mathbf{o})\subset\mathbb{R}^d$.

\begin{theorem}[Noise zoning]
\label{thm:noise_zoning}
If $\Phi_\mathbf{o}$ is continuous and satisfies the exact transport condition $(\Phi_\mathbf{o})_\#\mu_0=p_1^\mathbf{o}$, then $\mathcal{Z}_1^\mathbf{o},\dots,\mathcal{Z}_K^\mathbf{o}$ are pairwise disjoint open sets with
\begin{equation}
\mu_0(\mathcal{Z}_k^\mathbf{o})=p_1^\mathbf{o}(\mathcal{A}_k^\mathbf{o}),
\qquad
\mu_0(\mathcal{B}_Z^\mathbf{o})=0 .
\label{eq:noise_zoning_mass}
\end{equation}
\end{theorem}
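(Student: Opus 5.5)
The argument uses only elementary properties of preimages and pushforward measures. No ODE-specific structure beyond continuity and exact transport is needed.

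\emph{Openness.} Each $\mathcal{A}_k^\mathbf{o}$ is open in $\mathbb{R}^d$ and $\Phi_\mathbf{o}$ is continuous. Hence $\mathcal{Z}_k^\mathbf{o}=\Phi_\mathbf{o}^{-1}(\mathcal{A}_k^\mathbf{o})$ is open by the topological characterization of continuity.

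\emph{Disjointness.} Preimages commute with intersections, so for $j\neq k$
\[
\mathcal{Z}_j^\mathbf{o}\cap\mathcal{Z}_k^\mathbf{o}=\Phi_\mathbf{o}^{-1}\bigl(\mathcal{A}_j^\mathbf{o}\cap\mathcal{A}_k^\mathbf{o}\bigr)=\Phi_\mathbf{o}^{-1}(\emptyset)=\emptyset .
\]
This step uses only that $\Phi_\mathbf{o}$ is a well-defined (single-valued) map. That is exactly where determinism of the ODE transport enters: a single noise cannot land in two modes.

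\emph{Mass identities.} The sets $\mathcal{A}_k^\mathbf{o}$ are open and $\mathcal{B}_A^\mathbf{o}$ is the complement of their union, which is closed. So every set involved is Borel, and their preimages under the continuous map $\Phi_\mathbf{o}$ are Borel. By definition of the pushforward, $\mu_0(\Phi_\mathbf{o}^{-1}(E))=(\Phi_\mathbf{o})_\#\mu_0(E)$ for every Borel $E$. The exact transport condition replaces the right-hand side by $p_1^\mathbf{o}(E)$.
\begin{itemize}
\item Taking $E=\mathcal{A}_k^\mathbf{o}$ gives $\mu_0(\mathcal{Z}_k^\mathbf{o})=p_1^\mathbf{o}(\mathcal{A}_k^\mathbf{o})$.
\item Taking $E=\mathcal{B}_A^\mathbf{o}$ gives $\mu_0(\mathcal{B}_Z^\mathbf{o})=p_1^\mathbf{o}(\mathcal{B}_A^\mathbf{o})=0$ by the standing assumption on the boundary.
\end{itemize}
As a consistency check, the sets $\mathcal{Z}_1^\mathbf{o},\dots,\mathcal{Z}_K^\mathbf{o},\mathcal{B}_Z^\mathbf{o}$ partition $\mathbb{R}^d$, since preimages preserve partitions. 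The masses therefore sum to one, matching $\sum_k p_1^\mathbf{o}(\mathcal{A}_k^\mathbf{o})=1$.

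\emph{Main obstacle.} There is no deep obstacle; the only care needed is measure-theoretic bookkeeping. I would interpret $\mathcal{B}_A^\mathbf{o}$ as $\mathbb{R}^d\setminus\bigcup_k\mathcal{A}_k^\mathbf{o}$, so that the sets genuinely partition the space and are Borel. I would also note that $\Phi_\mathbf{o}$ is globally defined on $\mathbb{R}^d$, which follows if the ideal velocity field is, say, locally Lipschitz with non-explosive trajectories; the statement simply assumes this.
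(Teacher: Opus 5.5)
Your proposal is correct and matches the paper's proof step for step: openness from continuity of $\Phi_\mathbf{o}$, disjointness because preimages commute with intersections, and both mass identities from the pushforward identity applied to $\mathcal{A}_k^\mathbf{o}$ and to $\mathcal{B}_A^\mathbf{o}$. Your Borel-measurability bookkeeping, which reads $\mathcal{B}_A^\mathbf{o}$ as the closed complement of $\bigcup_k\mathcal{A}_k^\mathbf{o}$ (as the paper's appendix also does), is a harmless refinement that the paper leaves implicit.
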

\begin{proof}
See Appendix~\ref{app:noise_zoning}
\end{proof}

Theorem~\ref{thm:noise_zoning} shows that, up to a null boundary, the \textit{Gaussian} source is already partitioned into semantic \emph{noise zones}, each carrying exactly the probability mass of its behavioral mode. This partition is not imposed by design but arises from deterministic transport itself, which is the basic rationale for steering multimodal behavior through the source space: if modes already occupy distinct source regions, the generated behavior can in principle be selected by choosing where transport is initialized, without conditioning or modifying the velocity field.

\begin{wrapfigure}{r}{0.32\textwidth}
    \centering
    \vspace{-10pt}
    \includegraphics[width=\linewidth]{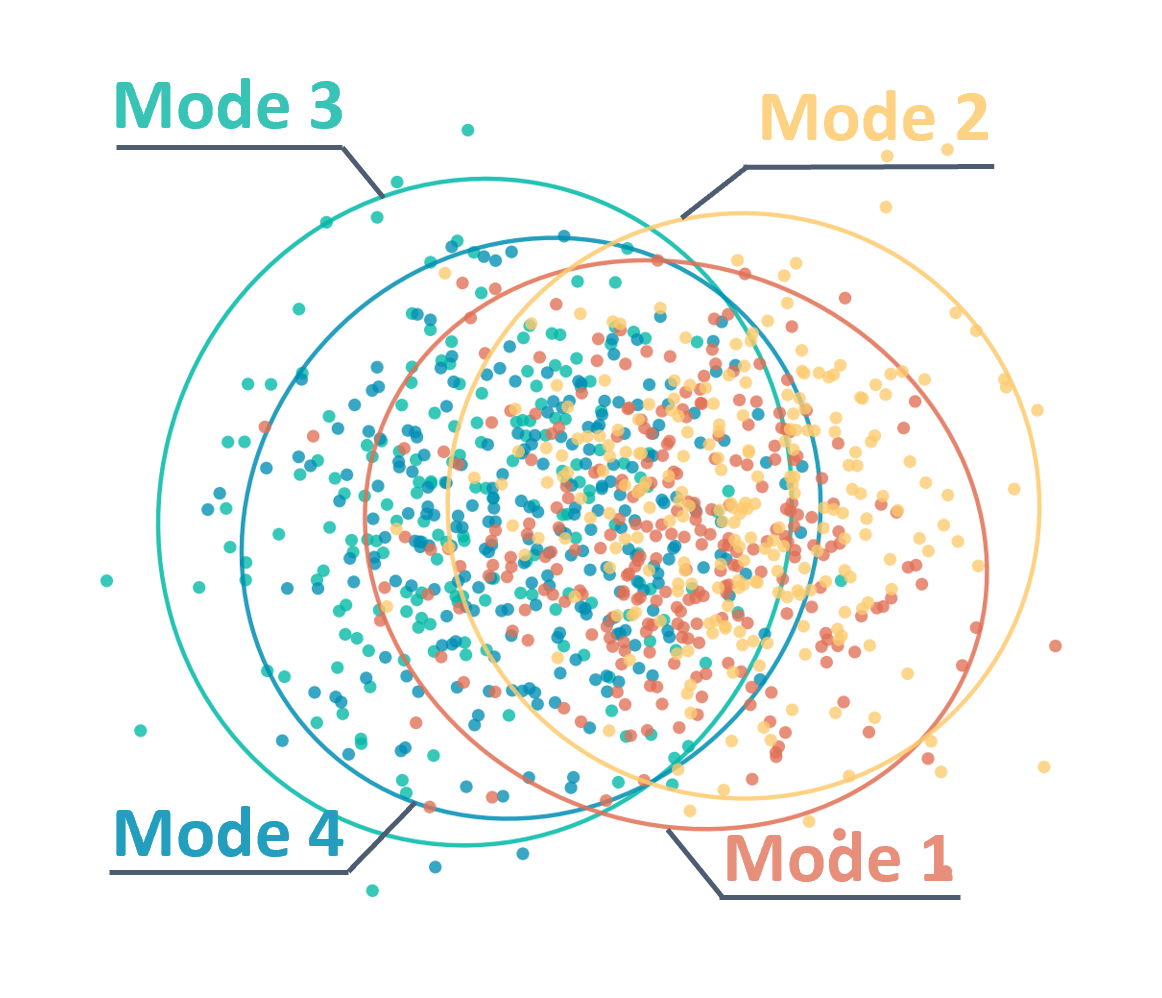}
    \vspace{-20pt}
    \caption{\textbf{Initial-noise mode separation in FM.}
    Experimental details are provided in Appendix~\ref{app:fm_noise_separation}.}
    \label{fig:fm_noise_separation}
    \vspace{-15pt}
\end{wrapfigure}

We now show that these zones are not arbitrarily fragmented sets but possess stable local geometry. For a concise global statement we take $\sigma_{\min}>0$. The case $\sigma_{\min}=0$ admits an analogous local guarantee as shown in Appendix~\ref{app:sigma_zero}. Under standard regularity conditions on the FM velocity field $\mathbf{v}_t^\mathbf{o}:\mathbb{R}^d\to\mathbb{R}^d$, Appendix~\ref{app:margin_separation} shows that the induced ODE flow is a bi-\textit{Lipschitz} homeomorphism. Specifically, there exists a finite constant $\Lambda_\mathbf{o}\in\mathbb{R}_{\ge 0}$ such that
\begin{equation}
\begin{aligned}
e^{-\Lambda_\mathbf{o}}\|\mathbf{z}-\mathbf{z}'\|_2
&\le
\|\Phi_\mathbf{o}(\mathbf{z})-\Phi_\mathbf{o}(\mathbf{z}')\|_2 \\
&\le
e^{\Lambda_\mathbf{o}}\|\mathbf{z}-\mathbf{z}'\|_2,
\end{aligned}
\quad
\forall \mathbf{z},\mathbf{z}'\in\mathbb{R}^d .
\label{eq:fm_bilipschitz}
\end{equation}

\begin{theorem}[Margin-induced source separation]
\label{thm:regularity}
Suppose $\Phi_\mathbf{o}$ is a bi-Lipschitz homeomorphism satisfying \eqref{eq:fm_bilipschitz}. Then $\mathcal{Z}_k^\mathbf{o}$ is open and connected whenever $\mathcal{A}_k^\mathbf{o}$ is.
Define the endpoint semantic margin
$\gamma_\mathbf{o}(\mathbf{z}):=\operatorname{dist}\left(\Phi_\mathbf{o}(\mathbf{z}),\mathcal{B}_A^\mathbf{o}\right)\in\mathbb{R}_{\ge 0}$
and the corresponding source-space radius
$R_\mathbf{o}(\mathbf{z}):=e^{-\Lambda_\mathbf{o}}\gamma_\mathbf{o}(\mathbf{z})\in\mathbb{R}_{\ge 0}$,
where $\operatorname{dist}(\mathbf{x},S):=\inf_{\mathbf{s}\in S}\|\mathbf{x}-\mathbf{s}\|_2$ for $\mathbf{x}\in\mathbb{R}^d$ and $S\subset\mathbb{R}^d$.
Then
\begin{equation}
\begin{aligned}
B\!\left(\mathbf{z},R_\mathbf{o}(\mathbf{z})\right)
&\subseteq \mathcal{Z}_k^\mathbf{o}, \\
\|\mathbf{z}-\mathbf{z}'\|_2
&\ge R_\mathbf{o}(\mathbf{z})+R_\mathbf{o}(\mathbf{z}'),
\end{aligned}
\label{eq:cross_mode_separation}
\end{equation}
for all $\mathbf{z}\in \mathcal{Z}_k^\mathbf{o}$ and
$\mathbf{z}'\in \mathcal{Z}_j^\mathbf{o}$ with $k\neq j$.
Here, $B(\mathbf{z},r):=\{\mathbf{u}\in\mathbb{R}^d:\|\mathbf{u}-\mathbf{z}\|_2<r\}$ denotes the open Euclidean ball centered at $\mathbf{z}\in\mathbb{R}^d$ with radius $r\ge 0$.
\end{theorem}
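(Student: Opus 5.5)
The argument uses only that $\Phi_\mathbf{o}$ is a homeomorphism of $\mathbb{R}^d$ with the two-sided Lipschitz bounds in \eqref{eq:fm_bilipschitz}. No probabilistic input is needed, since Theorem~\ref{thm:noise_zoning} already supplies disjointness and mass.

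\textbf{Topological part.} Because $\Phi_\mathbf{o}$ is continuous, $\mathcal{Z}_k^\mathbf{o}=\Phi_\mathbf{o}^{-1}(\mathcal{A}_k^\mathbf{o})$ is open whenever $\mathcal{A}_k^\mathbf{o}$ is. For connectedness, write $\mathcal{Z}_k^\mathbf{o}=\Phi_\mathbf{o}^{-1}(\mathcal{A}_k^\mathbf{o})$ as the image of the connected set $\mathcal{A}_k^\mathbf{o}$ under the continuous inverse map $\Phi_\mathbf{o}^{-1}$. The inverse exists and is continuous because $\Phi_\mathbf{o}$ is a homeomorphism; alternatively, the lower Lipschitz bound makes $\Phi_\mathbf{o}^{-1}$ $e^{\Lambda_\mathbf{o}}$-Lipschitz on the image. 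Hence $\mathcal{Z}_k^\mathbf{o}$ is connected.

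\textbf{Ball inclusion.} Fix $\mathbf{z}\in\mathcal{Z}_k^\mathbf{o}$ and take $\mathbf{u}\in B(\mathbf{z},R_\mathbf{o}(\mathbf{z}))$. The upper Lipschitz bound gives
\[
\|\Phi_\mathbf{o}(\mathbf{u})-\Phi_\mathbf{o}(\mathbf{z})\|_2\le e^{\Lambda_\mathbf{o}}\|\mathbf{u}-\mathbf{z}\|_2<\gamma_\mathbf{o}(\mathbf{z}).
\]
So $\Phi_\mathbf{o}(\mathbf{u})$ lies in the open ball of radius $\gamma_\mathbf{o}(\mathbf{z})$ around $\Phi_\mathbf{o}(\mathbf{z})\in\mathcal{A}_k^\mathbf{o}$. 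By definition of $\gamma_\mathbf{o}$, this ball misses $\mathcal{B}_A^\mathbf{o}$.

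This step is where the only real subtlety sits. We must argue that a ball avoiding $\mathcal{B}_A^\mathbf{o}$ and meeting $\mathcal{A}_k^\mathbf{o}$ is contained in $\mathcal{A}_k^\mathbf{o}$. I would read $\mathcal{B}_A^\mathbf{o}$ as $\mathbb{R}^d\setminus\bigcup_j\mathcal{A}_j^\mathbf{o}$, which is consistent with $h_\mathbf{o}$ being defined on $\mathbb{R}^d\setminus\mathcal{B}_A^\mathbf{o}$. Then the ball is a connected set covered by the pairwise disjoint open sets $\mathcal{A}_j^\mathbf{o}$. By connectedness it lies in exactly one of them, namely $\mathcal{A}_k^\mathbf{o}$. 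Therefore $\mathbf{u}\in\Phi_\mathbf{o}^{-1}(\mathcal{A}_k^\mathbf{o})=\mathcal{Z}_k^\mathbf{o}$.

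The case $\gamma_\mathbf{o}(\mathbf{z})=0$ is trivial, since the ball is empty. In fact $\gamma_\mathbf{o}(\mathbf{z})>0$ always holds when $\mathcal{B}_A^\mathbf{o}$ is closed. If the boundary is empty, so that $\gamma_\mathbf{o}=\infty$, the connectedness argument still shows that $\mathbb{R}^d=\mathcal{A}_k^\mathbf{o}$ and there is no $j\neq k$ to compare.

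\textbf{Separation.} Take $\mathbf{z}\in\mathcal{Z}_k^\mathbf{o}$ and $\mathbf{z}'\in\mathcal{Z}_j^\mathbf{o}$ with $k\neq j$. The segment $[\Phi_\mathbf{o}(\mathbf{z}),\Phi_\mathbf{o}(\mathbf{z}')]$ is connected and joins $\mathcal{A}_k^\mathbf{o}$ to $\mathcal{A}_j^\mathbf{o}$. By the same disjoint-open-cover argument, it must contain a point $\mathbf{b}\in\mathcal{B}_A^\mathbf{o}$. Then
\[
\|\Phi_\mathbf{o}(\mathbf{z})-\Phi_\mathbf{o}(\mathbf{z}')\|_2=\|\Phi_\mathbf{o}(\mathbf{z})-\mathbf{b}\|_2+\|\mathbf{b}-\Phi_\mathbf{o}(\mathbf{z}')\|_2\ge\gamma_\mathbf{o}(\mathbf{z})+\gamma_\mathbf{o}(\mathbf{z}').
\]
The upper Lipschitz bound gives $\|\Phi_\mathbf{o}(\mathbf{z})-\Phi_\mathbf{o}(\mathbf{z}')\|_2\le e^{\Lambda_\mathbf{o}}\|\mathbf{z}-\mathbf{z}'\|_2$. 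Dividing by $e^{\Lambda_\mathbf{o}}$ yields $\|\mathbf{z}-\mathbf{z}'\|_2\ge R_\mathbf{o}(\mathbf{z})+R_\mathbf{o}(\mathbf{z}')$.

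Notably only the upper bound is used for the metric claims; the lower bound (or simply homeomorphy) serves only the connectedness statement.
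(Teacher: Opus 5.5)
Your proof is correct and follows the paper's argument essentially step for step. Openness and connectedness come from continuity of $\Phi_\mathbf{o}$ and $\Phi_\mathbf{o}^{-1}$. Ball inclusion comes from the upper Lipschitz bound. Cross-mode separation comes from locating a boundary point $\mathbf{b}$ on the endpoint segment and dividing by $e^{\Lambda_\mathbf{o}}$. Your main addition is to spell out, via the partition $\mathbb{R}^d=\mathcal{B}_A^\mathbf{o}\cup\bigcup_j\mathcal{A}_j^\mathbf{o}$ and the disjoint-open-cover connectedness argument, why the ball of radius $\gamma_\mathbf{o}(\mathbf{z})$ stays inside $\mathcal{A}_k^\mathbf{o}$; the paper simply asserts this from the definition of $\gamma_\mathbf{o}$.
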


\begin{proof}
See Appendix~\ref{app:margin_separation}.
\end{proof}

Theorem~\ref{thm:regularity} shows that ideal FM induces not only distinct semantic noise zones, but also locally stable ones: behaviors lying farther from the semantic boundary correspond to a larger same-mode neighborhood in the source space.

\subsection{Practical FM Blurs the Noise Zoning}

In practice, finite demonstrations, imperfect optimization, and finite-step ODE integration make the realized endpoint map $\widetilde{\Phi}_\mathbf{o}:\mathbb{R}^d\to\mathbb{R}^d$ deviate from $\Phi_\mathbf{o}$. We measure this by the endpoint transport error 
$\eta_\mathbf{o}(\mathbf{z})=\|\widetilde{\Phi}_\mathbf{o}(\mathbf{z})-\Phi_\mathbf{o}(\mathbf{z})\|_2$

\begin{proposition}
\label{prop:mode_preservation}
For any $\mathbf{z}\in \mathcal{Z}_k^\mathbf{o}$, if $\eta_\mathbf{o}(\mathbf{z})<\gamma_\mathbf{o}(\mathbf{z})$ then
$h_\mathbf{o}(\widetilde{\Phi}_\mathbf{o}(\mathbf{z}))=h_\mathbf{o}(\Phi_\mathbf{o}(\mathbf{z}))$.
\end{proposition}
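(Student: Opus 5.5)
The plan is a direct metric argument in the action space: the realized endpoint lies strictly inside the ball of radius $\gamma_\mathbf{o}(\mathbf{z})$ around the ideal endpoint, and that ball is a connected set avoiding the semantic boundary. Fix $\mathbf{z}\in\mathcal{Z}_k^\mathbf{o}$ and set $\mathbf{x}=\Phi_\mathbf{o}(\mathbf{z})\in\mathcal{A}_k^\mathbf{o}$ and $\tilde{\mathbf{x}}=\widetilde{\Phi}_\mathbf{o}(\mathbf{z})$. By definition of the noise zone, $h_\mathbf{o}(\mathbf{x})=k$. Since $\gamma_\mathbf{o}(\mathbf{z})=\operatorname{dist}(\mathbf{x},\mathcal{B}_A^\mathbf{o})$, the open ball $B(\mathbf{x},\gamma_\mathbf{o}(\mathbf{z}))$ does not meet $\mathcal{B}_A^\mathbf{o}$. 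The hypothesis $\eta_\mathbf{o}(\mathbf{z})<\gamma_\mathbf{o}(\mathbf{z})$ says exactly that $\tilde{\mathbf{x}}\in B(\mathbf{x},\gamma_\mathbf{o}(\mathbf{z}))$. In particular $\tilde{\mathbf{x}}\notin\mathcal{B}_A^\mathbf{o}$, so $h_\mathbf{o}(\tilde{\mathbf{x}})$ is defined.

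It then remains to show $\tilde{\mathbf{x}}\in\mathcal{A}_k^\mathbf{o}$. Consider the segment $\mathbf{x}_s=(1-s)\mathbf{x}+s\tilde{\mathbf{x}}$ for $s\in[0,1]$. It lies entirely in the ball, which is convex, and hence in $\mathbb{R}^d\setminus\mathcal{B}_A^\mathbf{o}=\bigcup_j\mathcal{A}_j^\mathbf{o}$. The sets $\mathcal{A}_j^\mathbf{o}\cap[\mathbf{x},\tilde{\mathbf{x}}]$ are relatively open, pairwise disjoint, and cover the connected segment, so exactly one of them is nonempty. That one contains $\mathbf{x}\in\mathcal{A}_k^\mathbf{o}$, hence $\tilde{\mathbf{x}}\in\mathcal{A}_k^\mathbf{o}$ and $h_\mathbf{o}(\tilde{\mathbf{x}})=k=h_\mathbf{o}(\mathbf{x})$.

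The main obstacle is the implicit assumption that $\mathcal{B}_A^\mathbf{o}$ is precisely the complement of $\bigcup_k\mathcal{A}_k^\mathbf{o}$. The domain of $h_\mathbf{o}$, namely $\mathbb{R}^d\setminus\mathcal{B}_A^\mathbf{o}$, is only covered by the $\mathcal{A}_k^\mathbf{o}$ under this reading of the partition. I would state this explicitly as the intended reading, as already used in Theorem~\ref{thm:regularity}. Without it, the connectedness step fails, since points off the boundary might belong to no region.
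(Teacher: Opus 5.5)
Your proof is correct and follows the paper's route: place $\widetilde{\Phi}_\mathbf{o}(\mathbf{z})$ inside the boundary-free ball $B(\Phi_\mathbf{o}(\mathbf{z}),\gamma_\mathbf{o}(\mathbf{z}))$ and conclude that it lies in $\mathcal{A}_k^\mathbf{o}$. Your segment-connectedness step supplies the justification the paper skips when it simply asserts that $\widetilde{\mathbf{x}}\in\mathcal{A}_k^\mathbf{o}$ follows, and the partition $\mathbb{R}^d=\mathcal{B}_A^\mathbf{o}\cup\bigcup_k\mathcal{A}_k^\mathbf{o}$ you flag is exactly the one the paper writes out in its proof of Theorem~\ref{thm:regularity}.
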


\textit{Proof.} See Appendix~\ref{app:practical_error}.

\begin{corollary}[Boundary-sensitive region]
\label{cor:boundary_region}
The guarantee vanishes once the transport error becomes comparable to the semantic margin: source points with $\gamma_\mathbf{o}(\mathbf{z})=O(\eta_\mathbf{o}(\mathbf{z}))$ form a boundary-sensitive region in which mode confusion is likely to occur.
\end{corollary}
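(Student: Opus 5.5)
Corollary~\ref{cor:boundary_region} is informal, so the plan is to make it precise in two parts: a \emph{sufficiency} part showing that outside the region the mode is certified, and a \emph{sharpness} part showing that inside it the certificate of Proposition~\ref{prop:mode_preservation} cannot be improved. I will phrase the region as $\mathcal{S}_\mathbf{o}(c):=\{\mathbf{z}:\gamma_\mathbf{o}(\mathbf{z})\le c\,\eta_\mathbf{o}(\mathbf{z})\}$ for a constant $c\ge 1$. The first step is the contrapositive of Proposition~\ref{prop:mode_preservation}: any $\mathbf{z}$ with $h_\mathbf{o}(\widetilde{\Phi}_\mathbf{o}(\mathbf{z}))\neq h_\mathbf{o}(\Phi_\mathbf{o}(\mathbf{z}))$ must satisfy $\gamma_\mathbf{o}(\mathbf{z})\le\eta_\mathbf{o}(\mathbf{z})$. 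Hence every mode confusion lies in $\mathcal{S}_\mathbf{o}(1)\subseteq\mathcal{S}_\mathbf{o}(c)$.

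For sharpness, I will show that once $\eta_\mathbf{o}(\mathbf{z})\ge\gamma_\mathbf{o}(\mathbf{z})$, no bound using only the pair $(\eta,\gamma)$ can certify the mode. Let $\mathbf{x}=\Phi_\mathbf{o}(\mathbf{z})$, and pick $\mathbf{b}\in\mathcal{B}_A^\mathbf{o}$ (or $\mathbf{b}$ in its closure) attaining $\|\mathbf{x}-\mathbf{b}\|_2=\gamma_\mathbf{o}(\mathbf{z})$. Since the regions $\mathcal{A}_j^\mathbf{o}$ are open and $\mathbf{b}$ lies on the boundary of some $\mathcal{A}_j^\mathbf{o}$ with $j\neq k$, there are points $\mathbf{x}'\in\mathcal{A}_j^\mathbf{o}$ with $\|\mathbf{x}'-\mathbf{x}\|_2\le\gamma_\mathbf{o}(\mathbf{z})+\epsilon$ for every $\epsilon>0$. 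Any perturbation $\widetilde{\Phi}_\mathbf{o}(\mathbf{z})=\mathbf{x}'$ therefore has error essentially equal to $\gamma_\mathbf{o}(\mathbf{z})$ and flips the mode. So the threshold $\eta<\gamma$ is tight, and the guarantee vanishes exactly at the scale $\gamma=\Theta(\eta)$.

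Next I will transfer the region to the source space using Theorem~\ref{thm:regularity}. Because $R_\mathbf{o}=e^{-\Lambda_\mathbf{o}}\gamma_\mathbf{o}$, the bi-Lipschitz bound gives $\gamma_\mathbf{o}(\mathbf{z})\le e^{\Lambda_\mathbf{o}}\operatorname{dist}(\mathbf{z},\mathcal{B}_Z^\mathbf{o})$. Hence $\mathcal{S}_\mathbf{o}(c)$ contains the tube $\{\mathbf{z}:\operatorname{dist}(\mathbf{z},\mathcal{B}_Z^\mathbf{o})\le c\,e^{-\Lambda_\mathbf{o}}\eta_\mathbf{o}(\mathbf{z})\}$, and I expect it is contained in the analogous tube with constant $c\,e^{\Lambda_\mathbf{o}}$. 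This identifies the boundary-sensitive region as an $O(\eta)$-neighborhood of the source-space interface $\mathcal{B}_Z^\mathbf{o}$.

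The main obstacle is the word ``likely.'' Sharpness only shows that confusion is \emph{possible} there, not that it is probable. I would not try to prove a probabilistic statement. Instead I will state the corollary as the pair ``confusion $\Rightarrow$ $\mathbf{z}\in\mathcal{S}_\mathbf{o}(1)$'' plus the tightness construction above, and remark that with $\mu_0(\mathcal{B}_Z^\mathbf{o})=0$ from Theorem~\ref{thm:noise_zoning} the region's mass tends to zero as $\eta\to0$. The one technical point to check is that the nearest boundary point $\mathbf{b}$ borders a different mode; this holds because the $\mathcal{A}_j^\mathbf{o}$ are open and together with $\mathcal{B}_A^\mathbf{o}$ cover $\mathbb{R}^d$, provided the boundary has empty interior.
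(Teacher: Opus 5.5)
Your first step, the contrapositive of Proposition~\ref{prop:mode_preservation} that places every mode confusion in $\{\gamma_\mathbf{o}\le\eta_\mathbf{o}\}$, is essentially all the paper's proof does. The paper phrases it with a uniform bound $\eta_\mathbf{o}\le\epsilon$ and the sublevel set $\mathcal{S}_\epsilon^\mathbf{o}=\{\mathbf{z}:\gamma_\mathbf{o}(\mathbf{z})\le\epsilon\}$, and adds only the monotonicity $\mathcal{S}_{\epsilon_1}^\mathbf{o}\subseteq\mathcal{S}_{\epsilon_2}^\mathbf{o}$ for $\epsilon_1\le\epsilon_2$. It attempts neither sharpness nor a source-space description. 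The uniform version gives a region that depends only on the ideal map and a scalar budget, not on the realized $\widetilde{\Phi}_\mathbf{o}$. It is also what your mass remark actually needs: under $\eta_\mathbf{o}\le\epsilon$ your region lies in $\{\gamma_\mathbf{o}\le c\epsilon\}$. Since $\mathcal{B}_A^\mathbf{o}$ is closed, these sets decrease to $\{\gamma_\mathbf{o}=0\}=\mathcal{B}_Z^\mathbf{o}$ as $\epsilon\to0$, and continuity from above gives vanishing $\mu_0$-mass. Your tube step is a correct addition and in fact yields both inclusions. Because $\Phi_\mathbf{o}$ is a bijection, $\mathcal{B}_A^\mathbf{o}=\Phi_\mathbf{o}(\mathcal{B}_Z^\mathbf{o})$, hence $e^{-\Lambda_\mathbf{o}}\operatorname{dist}(\mathbf{z},\mathcal{B}_Z^\mathbf{o})\le\gamma_\mathbf{o}(\mathbf{z})\le e^{\Lambda_\mathbf{o}}\operatorname{dist}(\mathbf{z},\mathcal{B}_Z^\mathbf{o})$.

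The sharpness step, however, has a genuine gap: your justification that the nearest boundary point borders a different mode is false.
\begin{itemize}
\item The open ball $B(\mathbf{x},\gamma_\mathbf{o}(\mathbf{z}))$ is connected, misses $\mathcal{B}_A^\mathbf{o}$, and contains $\mathbf{x}\in\mathcal{A}_k^\mathbf{o}$, so it lies in $\mathcal{A}_k^\mathbf{o}$. The nearest point $\mathbf{b}$ (attained in $\mathcal{B}_A^\mathbf{o}$ itself, which is closed) is therefore always in $\overline{\mathcal{A}_k^\mathbf{o}}$.
\item Openness, the covering property, and empty interior do not put $\mathbf{b}$ in $\overline{\mathcal{A}_j^\mathbf{o}}$ for any $j\neq k$. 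Take $d\ge2$, $\mathcal{A}_2^\mathbf{o}=\{x_1>0\}$ and $\mathcal{A}_1^\mathbf{o}=\{x_1<0\}\setminus\{\mathbf{p}\}$ with $\mathbf{p}=(-5,0,\dots,0)$. Then $\mathcal{B}_A^\mathbf{o}=\{x_1=0\}\cup\{\mathbf{p}\}$ is closed and Lebesgue-null with empty interior, so $p_1^\mathbf{o}(\mathcal{B}_A^\mathbf{o})=0$ for any absolutely continuous $p_1^\mathbf{o}$. Both regions are open and connected.
\item At $\mathbf{x}=(-4.5,0,\dots,0)$ you have $\gamma_\mathbf{o}=0.5$, yet every mode-flipping endpoint lies at distance more than $4.5$. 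The threshold is far from tight there.
\end{itemize}
To repair this, you have three options. You can add the hypothesis that $\mathcal{B}_A^\mathbf{o}$ is a true interface, meaning every point of it lies in $\overline{\mathcal{A}_i^\mathbf{o}}\cap\overline{\mathcal{A}_j^\mathbf{o}}$ for some $i\neq j$; then your argument goes through. You can instead claim tightness only in the worst case over partitions, for which a half-space split suffices. Or you can drop sharpness altogether, since the paper's version asserts only that the certificate stops applying and that the uncertified set grows with the error budget.

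Separately, a flip needs error strictly larger than $\gamma_\mathbf{o}(\mathbf{z})$. An open $\mathcal{A}_j^\mathbf{o}$ with $j\neq k$ cannot meet the sphere of radius $\gamma_\mathbf{o}(\mathbf{z})$ without meeting the ball. So at $\eta_\mathbf{o}=\gamma_\mathbf{o}$ the endpoint can at worst land in $\mathcal{B}_A^\mathbf{o}$, and your phrase ``once $\eta\ge\gamma$'' should read $\eta>\gamma$.
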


Every practical factor pushes in the same direction: finite demonstrations make velocity estimation unreliable in sparsely covered regions, mode imbalance further removes supervision from minority behaviors, a network of bounded \textit{Lipschitz} constant cannot reproduce the sharp velocity variation that a clean mode separation demands~\citep{mazza2026understanding}, and finite-step integration adds numerical error. The resulting growth of $\eta_\mathbf{o}$ widens the ambiguous region and manifests during sampling as \emph{mode intersection}, \emph{mode preference}, or even local \emph{mode collapse}. This exposes a fundamental limitation of leaving noise zoning to be learned implicitly from a standard \textit{Gaussian} source: the existence of an ideal mode partition does not imply that sufficiently clear, stable, and controllable source regions emerge under finite data, limited model capacity, and finite-step inference.

\section{Mode-Oriented Source Shaping}

\subsection{Overview}
Section~\ref{sec:analysis} shows that FM naturally induces
mode-dependent regions in the source space, but this separation is implicit and can become weak or entangled in practice. Rather than recovering these implicit regions, MOSS deliberately imposes a simple, semantically aligned source geometry to make mode separation explicit and controllable. It enlarges the source-space separation between modes when it is actually needed, and selects the mode by choosing \emph{where the transport starts}. How much separation is needed depends on the state: when the upcoming behavior is highly multimodal, MOSS places the mode-specific sources far apart; when the behavior is essentially determined, it lets them contract back toward a shared \textit{Gaussian}. Figure~\ref{fig:teaser} depicts its architecture.

Concretely, given an observation history $\mathbf{o}$, a target action chunk $\mathbf{y}$, and a mode label $k$, MOSS predicts a modality strength $q\in[0,1]$ for the current state and draws the source sample $\mathbf{x}_0$ from the corresponding mode-specific component. In our implementation, we use $\sigma_{\min}=0$. Training then follows standard FM~\citep{lipman2023flow}, with a single velocity field shared by all modes:
\begin{equation}
\begin{aligned}
\mathbf{x}_t &= (1-t)\mathbf{x}_0 + t\mathbf{y}, \\
\mathcal{L}_{\mathrm{FM}}
&=
\mathbb{E}\left[
\left\|
\mathbf{v}_\theta(\mathbf{x}_t,t,\mathbf{o})-(\mathbf{y}-\mathbf{x}_0)
\right\|_2^2
\right].
\end{aligned}
\end{equation}

Mode selection is thereby moved out of the generative network and into the source space. The network learns one shared transport, while different behaviors are obtained by initializing that transport in different regions. In the following, we detail the architecture of MOSS.

\subsection{Modality Strength Estimator}
\label{sec:modality_strength_estimator}

The degree of behavioral multimodality can vary substantially along a robot trajectory~\citep{jia2026marspolicy}. We therefore dynamically estimate whether strong mode separation is necessary at the current state.

We first construct a multimodality supervision signal $r_i$ from the training data. For a query state $i$, we retrieve local neighbors with similar observation histories $\mathbf{o}_{\le t}$ and executed action histories $\mathbf{a}_{<t}$, and compare the between-mode variation of their future actions with the corresponding within-mode variation. Denoting these quantities by $\mathrm{Var}_{B,i}$ and $\mathrm{Var}_{W,i}$, respectively, we define
\begin{equation}
r_i
=
\frac{\mathrm{Var}_{B,i}}
{\mathrm{Var}_{B,i} + \beta \mathrm{Var}_{W,i} + \delta},
\end{equation}
where $\beta>0$ controls the relative contribution of within-mode variation, and $\delta>0$ is a small constant for numerical stability.

When the local neighborhood contains only a single plausible future mode, $\mathrm{Var}_{B,i}$ is close to zero and the resulting $r_i$ is small. In contrast, when multiple modes coexist and their future trajectories are clearly separated, $\mathrm{Var}_{B,i}$ increases and produces a larger modality strength. A large within-mode variation $\mathrm{Var}_{W,i}$ suppresses spurious multimodality signals caused by noisy or unstable neighbors.

We then train a lightweight MLP as the \emph{Modality Strength Estimator}:
\begin{equation}
w = s_\phi(\mathbf{o}_{\le t}, \mathbf{a}_{<t}),
\end{equation}
After training, $s_\phi$ is completely frozen and receives no gradient from the FM objective.

As visualized in Figure~\ref{fig:estimator}, the predicted strength peaks near states where multiple behavioral modes diverge and decreases once the mode has been resolved.
Since the numerical scale of $w$ can vary across tasks, we normalize it using task-wise percentile calibration computed solely from the training set, yielding a calibrated strength $q\in[0,1]$.
During rollout, we further apply temporal persistence to obtain $q^{\mathrm{eff}}$, preventing the separation strength from decaying before the selected behavior is completed.
For simplicity, we denote this effective strength as $q$ hereafter.
Additional estimator details are provided in Appendix~\ref{app:estimator}.

\subsection{Adaptive Mode-Oriented Source Shaping}

Given the estimated modality strength $q$, MOSS adaptively separates the source distributions associated with different behavioral modes. We present a unified formulation for $K$ modes.

To define the mode-discriminative subspace, we use the multimodality supervision $r_i$ from Section~\ref{sec:modality_strength_estimator} to identify training states where future behaviors diverge. Let $\bar{\mathbf{a}}_k$ denote the empirical mean future-action chunk of mode $k$ within this region. We then remove the component shared across modes,

\begin{equation}
\tilde{\mathbf{a}}_k
=
\bar{\mathbf{a}}_k
-
\frac{1}{K}\sum_{j=1}^{K}\bar{\mathbf{a}}_j,
\label{eq:action_mean}
\end{equation}
and let $\mathbf{U}\in\mathbb{R}^{d\times d_s}$ be an orthonormal basis of the subspace spanned by $\{\tilde{\mathbf{a}}_k\}_{k=1}^{K}$, where $d_s\le K-1$ denotes the dimension of the mode-discriminative subspace.

For a requested mode $k$, MOSS defines its source center as
\begin{equation}
\mathbf{c}_k(q)=\alpha(q)\tilde{\mathbf{a}}_k,
\end{equation}
and samples the initial condition from
\begin{equation}
p(\mathbf{x}_0\mid k,q)
=
\mathcal{N}\!\left(
\mathbf{c}_k(q),
\;
\sigma_{\parallel}^2(q)\mathbf{U}\mathbf{U}^\top
+
\sigma_{\perp}^2(\mathbf{I}-\mathbf{U}\mathbf{U}^\top)
\right).
\label{eq:moss_source}
\end{equation}

The modality strength $q$ controls both the displacement of the source centers and their overlap, where $\sigma_{\perp}$ is a fixed standard deviation in the orthogonal complement of the mode-discriminative subspace. Specifically, let
\begin{equation}
\Delta(q)
=
\min_{i\neq j}
\left\|
\mathbf{c}_i(q)-\mathbf{c}_j(q)
\right\|_2
\end{equation}
denote the minimum inter-mode center distance. We set
\begin{equation}
\sigma_{\parallel}(q)
=
\operatorname{clip}\left(
\frac{\Delta(q)}{2\rho(q)},
\sigma_{\parallel,\min},
\sigma_{\parallel,\max}
\right),
\label{eq:moss_sigma}
\end{equation}
where both $\alpha(q)$ and the target separation ratio $\rho(q)$ increase monotonically with $q$. Hence, highly multimodal states induce more widely separated and less overlapping mode-specific sources, whereas small $q$ keeps the source distributions closer to the standard Gaussian prior. For $K=2$, the centered mode means are opposite along a one-dimensional subspace, reducing ~\eqref{eq:moss_source} to the symmetric dual-Gaussian construction.

Importantly, the mode label is used only to select the initial source distribution and is never provided to the velocity field $\mathbf{v}_\theta$. MOSS therefore retains a single shared FM velocity field and realizes behavioral control solely through its initial condition. At inference time, a user or high-level planner only needs to specify the desired mode $k$ and sample $\mathbf{x}_0$ from the corresponding source distribution. Further implementation details of the source shaping procedure are provided in Appendix~\ref{app:source_shaping}.

\section{Evaluation}

\subsection{Experimental Setup}

We evaluate MOSS on $6$ simulated tasks and $3$ real-world robotic tasks, as shown in Figure~\ref{fig:experiments}. The simulation benchmark consists of $4$ bimodal tasks---\emph{Push Cube}, \emph{Grasp Eyeglasses}, \emph{Stack Cube}, and \emph{Put Wine}---and two multi-mode tasks, \emph{2D Navigation} and \emph{Place Cube}~\citep{jia2026marspolicy, mu2021maniskill, liu2023libero}. Each task admits multiple distinct yet valid strategies for successful completion. For real-world evaluation, we consider \emph{Place Cups}, \emph{Pick Veg}, and \emph{Open Drawer} on a Galaxea R1 Lite robotic platform.

\begin{figure*}[t]
    \centering
    \vspace{-30pt}
    \includegraphics[width=0.8\textwidth]{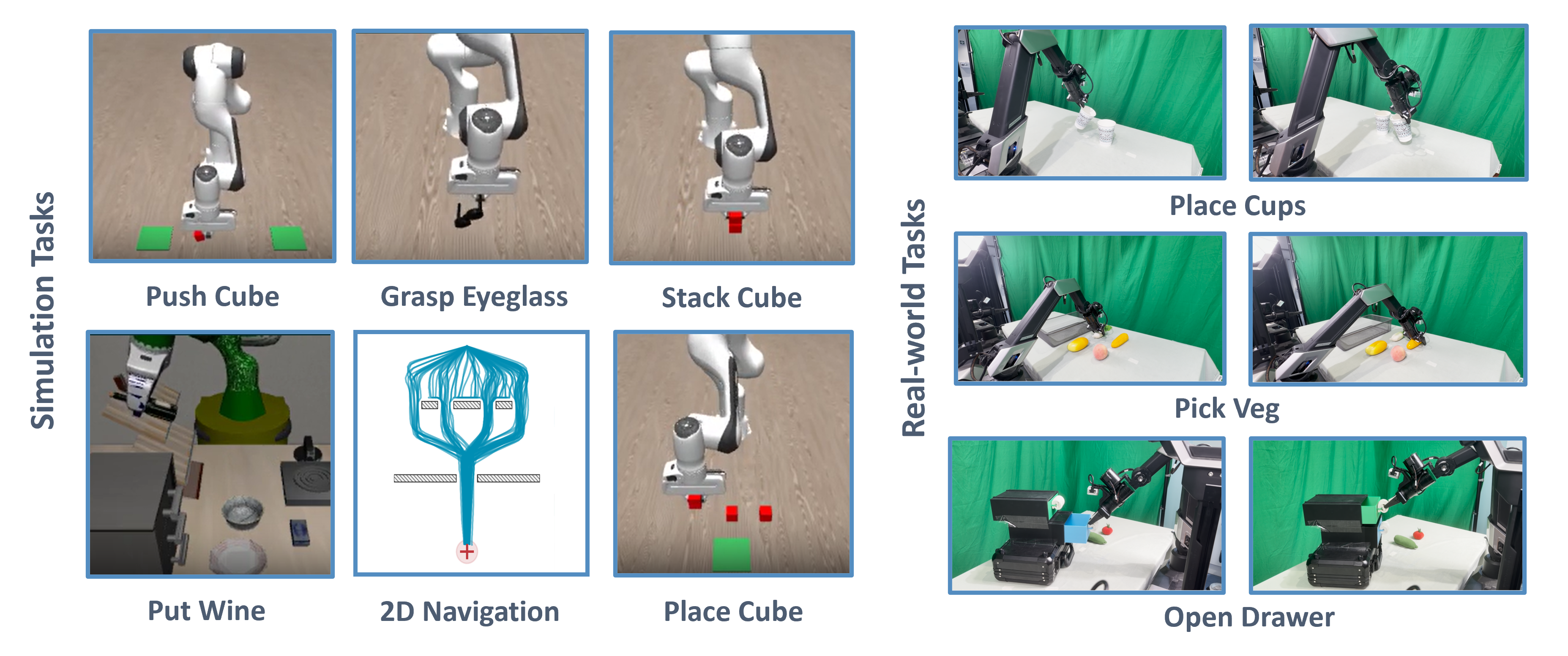}
    \vspace{-10pt}
    \captionsetup{justification=centering}
    \caption{\textbf{Overview of the simulation and real-world tasks used for evaluation.}}
    \vspace{-20pt}
    \label{fig:experiments}
\end{figure*}

% \begin{figure*}[t]
%     \centering
%     \vspace{-10pt}
%     \includegraphics[width=0.6\textwidth]{figures/estimator.png}
%     \captionsetup{justification=centering}
%     \caption{\textbf{Modality strength over task progress.} 
%     }
%     \label{fig:estimator}
%     \vspace{-10pt}
% \end{figure*}

\begin{wrapfigure}{r}{0.55\textwidth}
    \centering
    \vspace{-10pt}
    \includegraphics[width=\linewidth]{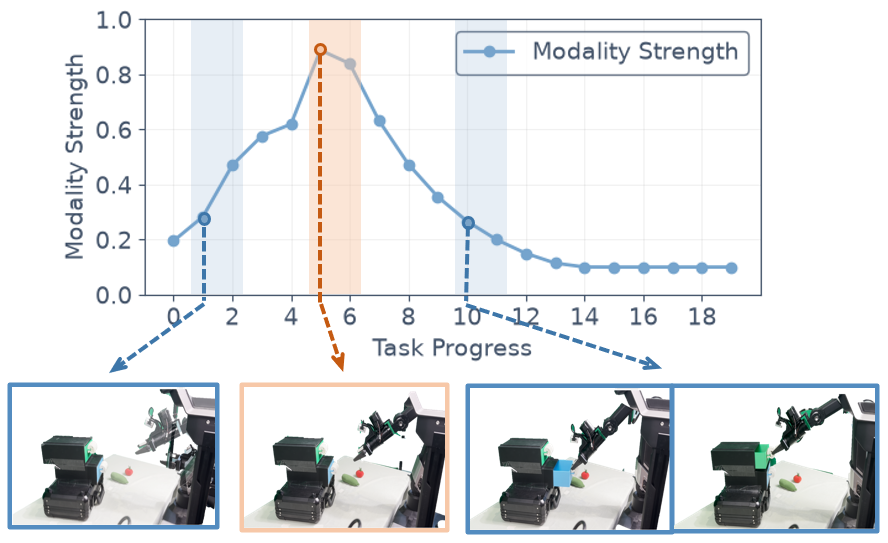}
    \vspace{-20pt}
    \captionsetup{justification=centering}
    \caption{\textbf{Modality strength over task progress.} 
    % The estimator peaks near the behavioral branching stage and decreases once the future motion becomes increasingly determined; larger modality strength triggers stronger source separation in MOSS.
    }
    \label{fig:estimator}
    \vspace{-10pt}
\end{wrapfigure}

We compare MOSS against $8$ baselines. Unconditional generative policies include FM-DiT~\citep{lipman2023flow,peebles2023scalable}, DDPM-DiT~\citep{ho2020denoising,peebles2023scalable}, ACT~\citep{zhao2023learning}, BET~\citep{shafiullah2022behavior}, A2A~\citep{jia2026a2a}, and MARS Policy~\citep{jia2026marspolicy}. We further compare against \emph{Mode-conditioned FM}, following mode-conditioned flow-matching formulations~\citep{lei2026primaryfine},
which provides the desired behavioral mode as an additional conditioning
signal to the velocity field, and \emph{GMM-Sourced FM}, following
multimodal source-space formulations~\citep{luo2026mmfm,zhang2026slfm},
which uses data-driven \emph{Gaussian} mixture components without explicit semantic mode supervision.
Key hyperparameters are standardized across models to ensure a fair
comparison. Additional experimental settings and evaluation details are provided in
Appendix~\ref{app:experiment_setup}.

Our evaluation targets three questions: (i) can MOSS steer the policy to a specified mode as expected; (ii) can it recover a minority mode under strongly imbalanced training data; and (iii) does it also bring other benefits, like improving task success and training efficiency?

\subsection{Controllable Multimodal Behavior}

\begin{table*}[t]
\centering
\caption{\textbf{Evaluation results on $4$ simulation tasks.} For each requested mode, we report GR and SR (GR / SR). 
% MOSS achieves consistently high controllability while maintaining task success across different behavioral modes. 
Best GR for each requested mode is shown in bold.}
\label{tab:sim_bimodal}
\vspace{-10pt}
\resizebox{\textwidth}{!}{
\begin{tabular}{lcccccccc}
\toprule
& \multicolumn{2}{c}{Push Cube}
& \multicolumn{2}{c}{Grasp Eyeglasses}
& \multicolumn{2}{c}{Stack Cube}
& \multicolumn{2}{c}{Put Wine} \\
\cmidrule(lr){2-3}
\cmidrule(lr){4-5}
\cmidrule(lr){6-7}
\cmidrule(lr){8-9}
Method
& Mode 1 & Mode 2
& Mode 1 & Mode 2
& Mode 1 & Mode 2
& Mode 1 & Mode 2 \\
\midrule

Mode-Cond. FM
& 0.86 / 0.96 & 0.94 / 1.00
& \textbf{1.00} / 1.00 & \textbf{1.00} / 1.00
& 0.96 / 0.96 & 0.94 / 0.94
& 0.92 / 0.92 & \textbf{0.94} / 0.96
\\

GMM-Sourced FM
& 0.72 / 0.96 & 0.92 / 1.00
& 0.98 / 1.00 & 0.42 / 1.00
& 0.58 / 1.00 & 0.56 / 1.00
& \textbf{0.94} / 0.94 & 0.80 / 0.94
\\

\textbf{MOSS}
& \textbf{0.99} / 0.99 & \textbf{1.00} / 1.00
& \textbf{1.00} / 1.00 & \textbf{1.00} / 1.00
& \textbf{0.98} / 0.98 & \textbf{0.96} / 0.96
& 0.92 / 0.94 & 0.86 / 0.94
\\

\bottomrule
\end{tabular}
}
\centering
\vspace{-10pt}

\end{table*}

\textbf{Simulation results.} Table~\ref{tab:sim_bimodal} summarizes controllable behavior generation on the four bimodal simulation tasks. MOSS consistently achieves high SR and GR for both requested modes, attaining the strongest average mode controllability while maintaining high task success. Mode-conditioned FM is also competitive in simulation, whereas GMM-Sourced FM exhibits substantially more task-dependent semantic alignment. As shown in Figure~\ref{fig:simulator_unguided}, unconditional policies can represent multiple behavioral outcomes but provide no mechanism for selecting a desired mode, often resulting in task-dependent mode preference.

\begin{figure*}[t]
    \centering
    % \vspace{-20pt}
    \includegraphics[width=0.9\textwidth]{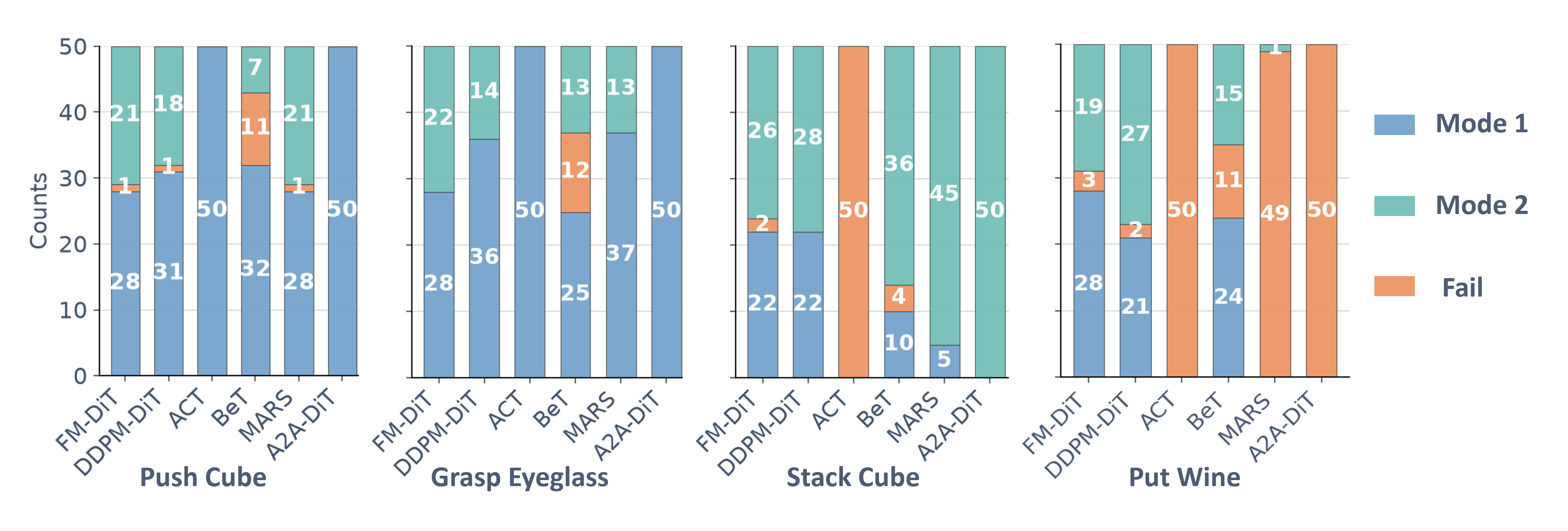}
    \vspace{-10pt}
    \captionsetup{justification=centering}
    \caption{\textbf{Behavior distributions of unconditional policies on simulation tasks.}}
    % \vspace{-20pt}
    \label{fig:simulator_unguided}
\end{figure*}

\begin{wrapfigure}{r}{0.50\textwidth}
    \centering
    \vspace{-20pt}
    \includegraphics[width=0.95\linewidth]{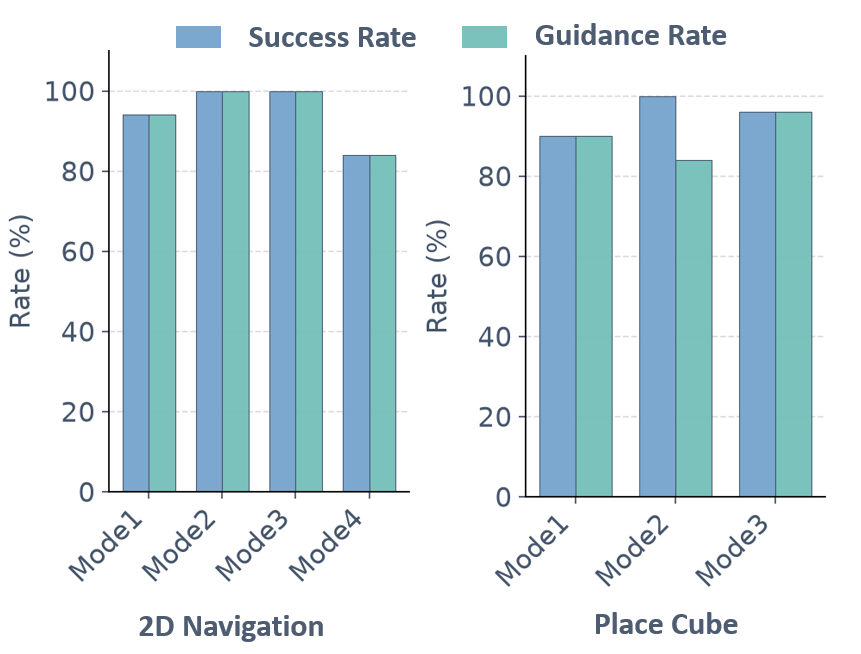}
    % \vspace{-20pt}
    \captionsetup{justification=centering}
    \caption{\textbf{Results on \emph{2D Navigation} and \emph{Place Cube}.}}
    \vspace{-20pt}
    \label{fig:simulator_multi}
\end{wrapfigure}

MOSS also generalizes beyond binary behaviors. As shown in Figure~\ref{fig:simulator_multi}, MOSS maintains high SR and GR across multiple requested modes on \emph{2D Navigation} and \emph{Place Cube}. Figure~\ref{fig:estimator} shows that the estimated modality strength remains low when different behaviors share similar motion and peaks near states where future trajectories diverge. Since the source separation in MOSS increases monotonically with $q$, these peaks trigger stronger mode-specific source separation around decision-critical stages.

\textbf{Real-world results.}
As shown in Table~\ref{tab:real_world_results}, MOSS achieves strong SR and GR on all three real-world tasks and is substantially more balanced across behavioral modes than Mode-conditioned FM. For example, Mode-conditioned FM fails to control mode 1 on \emph{Pick Veg} and mode 2 on \emph{Open Drawer}, whereas MOSS retains controllability for both modes. These results suggest that source-space control remains effective despite the visual and execution variability encountered in real-world manipulation. Increasing classifier-free guidance (CFG)~\citep{ho2022cfg} can partially improve Mode-conditioned FM, but does not consistently recover both SR and GR. Details are provided in Appendix~\ref{app:cfg_results}.

% \begin{table}[!htbp]
%     \centering
%     \caption{
%         \textbf{Real-world results.}
%         We report average SR and mode-specific
%         GR$_1$ and GR$_2$.
%         Best GR for each requested mode is shown in bold.
%     }
%     \label{tab:real_world_results}

%     \small
%     \begin{tabular}{lccc}
%         \toprule
%         Method & SR & GR$_1$ & GR$_2$ \\
%         \midrule
%         \multicolumn{4}{l}{\textit{Place Cups}} \\
%         FM-DiT & 0.70 & -- & -- \\
%         Mode-conditioned FM & 0.70 & 0.15 & 0.85 \\
%         \textbf{MOSS} & 0.98 & \textbf{0.95} & \textbf{1.00} \\
%         \midrule
%         \multicolumn{4}{l}{\textit{Pick Veg}} \\
%         FM-DiT & 0.60 & -- & -- \\
%         Mode-conditioned FM & 0.65 & 0.00 & 0.60 \\
%         \textbf{MOSS} & 0.75 & \textbf{0.80} & \textbf{0.65} \\
%         \midrule
%         \multicolumn{4}{l}{\textit{Open Drawer}} \\
%         FM-DiT & 0.75 & -- & -- \\
%         Mode-conditioned FM & 0.85 & 0.85 & 0.00 \\
%         \textbf{MOSS} & 0.83 & \textbf{0.90} & \textbf{0.75} \\
%         \bottomrule
%     \end{tabular}
% \end{table}

\subsection{Robustness and Training Efficiency}

To evaluate learning under biased demonstrations, we construct $10{:}90$ mode-imbalanced datasets on \emph{Push Cube}, \emph{Grasp Eyeglasses}, and \emph{Stack Cube}, and always evaluate the minority mode. MOSS and Mode-conditioned FM are trained on all $100$ demonstrations, whereas FM-DiT is trained only on the $10$ minority-mode demonstrations. Thus, all methods observe the same number of target-mode examples, while the former two can additionally exploit motion shared with the majority mode.

\begin{wraptable}{r}{0.45\textwidth}
    \centering
    \vspace{-20pt}
    \caption{\textbf{Real-world results.} We report average SR and mode-specific GR$_1$ and GR$_2$. Best GR for each requested mode is shown in bold.}
    \label{tab:real_world_results}
    % \vspace{-10pt}
    \small
    \begin{tabular}{lccc}
        \toprule
        Method & SR & GR$_1$ & GR$_2$ \\
        \midrule
        \multicolumn{4}{l}{\textit{Place Cups}} \\
        FM-DiT & 0.70 & -- & -- \\
        Mode-conditioned FM & 0.70 & 0.15 & 0.85 \\
        \textbf{MOSS} & 0.98 & \textbf{0.95} & \textbf{1.00} \\
        \midrule
        \multicolumn{4}{l}{\textit{Pick Veg}} \\
        FM-DiT & 0.60 & -- & -- \\
        Mode-conditioned FM & 0.65 & 0.00 & 0.60 \\
        \textbf{MOSS} & 0.75 & \textbf{0.80} & \textbf{0.65} \\
        \midrule
        \multicolumn{4}{l}{\textit{Open Drawer}} \\
        FM-DiT & 0.75 & -- & -- \\
        Mode-conditioned FM & 0.85 & 0.85 & 0.00 \\
        \textbf{MOSS} & 0.83 & \textbf{0.90} & \textbf{0.75} \\
        \bottomrule
    \end{tabular}
    \vspace{-20pt}
    
\end{wraptable}

\begin{figure*}[t]
    \centering
    \vspace{-10pt}
    \includegraphics[width=0.9\textwidth]{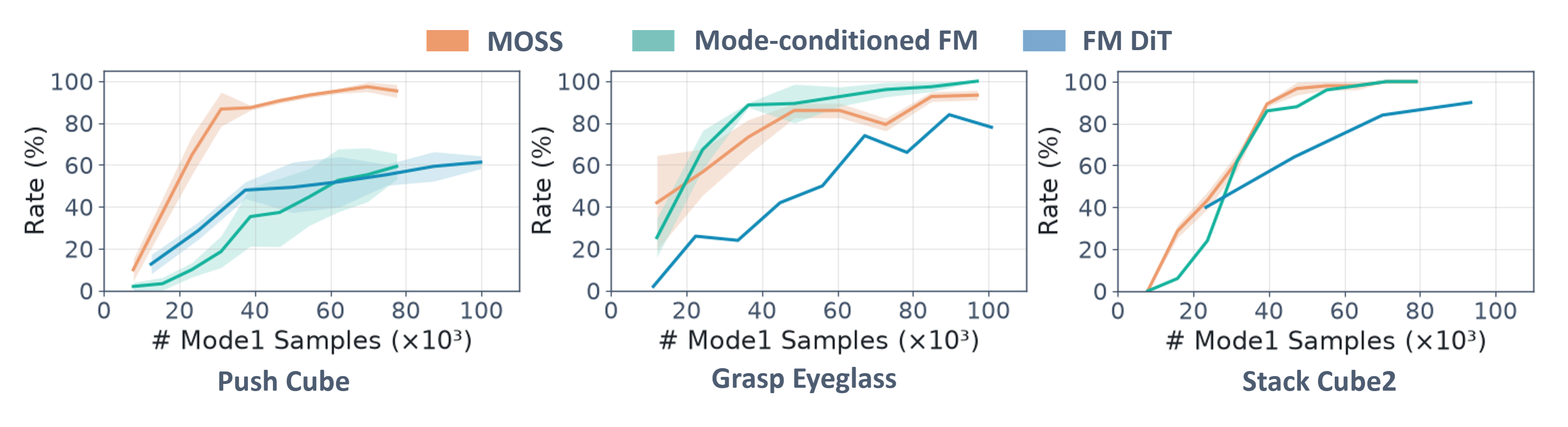}
    \vspace{-10pt}
    \caption{\textbf{Minority-mode learning efficiency under imbalanced datasets.} Further details are provided in Appendix~\ref{app:imbalance_results}.}
    \vspace{-5pt}
    \label{fig:imbalanced_efficiency}
\end{figure*}

As shown in Figure~\ref{fig:imbalanced_efficiency}, MOSS exhibits strong minority-mode sample efficiency, reaching high GR with relatively few target-mode sample exposures. The advantage is particularly pronounced on \emph{Push Cube}, where MOSS approaches its final performance substantially earlier than both baselines. Compared with FM-DiT trained only on minority demonstrations, these results suggest that the shared velocity field can reuse motion structure learned from the majority mode rather than learning the minority behavior entirely from scratch. Overall, MOSS provides robust minority-mode control while retaining high training efficiency under strongly imbalanced demonstrations. Implementation details are provided in Appendix~\ref{app:imbalance_results}.

\subsection{Ablation Studies}
\label{sec:ablation}

\begin{table*}[t]
\centering
\caption{\textbf{Ablation on $3$ simulation tasks.} We report mode-specific guidance rates (GR$_1$ and GR$_2$).  Best GR for each requested mode is shown in bold.}
\label{tab:ablation}
\vspace{-5pt}
\setlength{\tabcolsep}{5pt}
\renewcommand{\arraystretch}{1.05}
\resizebox{0.6\textwidth}{!}{%
\begin{tabular}{lccc}
\toprule
\textbf{Method}
& \textbf{Push Cube}
& \textbf{Grasp Eyeglasses}
& \textbf{Stack Cube} \\
\midrule
MOSS (single \emph{Gaussian})
& 0.50 / 0.68
& 0.76 / 0.80
& 0.58 / 0.70 \\
MOSS (single-step separation)
& 0.64 / 0.62
& 0.66 / 0.72
& 0.54 / 0.68 \\
MOSS (w/o adaptive)
& 0.86 / 0.85
& 0.86 / 0.90
& 0.78 / 0.88 \\
\textbf{MOSS (ours)}
& \textbf{0.99 / 1.00}
& \textbf{1.00 / 1.00}
& \textbf{0.98 / 0.96} \\
\bottomrule
\end{tabular}%
}

\vspace{-15pt}
\end{table*}

Table~\ref{tab:ablation} isolates three design choices in MOSS. Specifically, the variants test whether mode-specific Gaussian sources are necessary, whether separation at only the most multimodal action position is sufficient, and whether adaptive shaping improves over fixed separation. Using a single shared Gaussian yields poor controllability, while single-step separation provides limited improvement; extending separation to the full action chunk substantially increases GR. Adaptive source shaping further improves over fixed full-chunk separation and typically reaches strong performance earlier during training. Additional analyses are provided in Appendix~\ref{app:ablation}.

\section{Conclusion}

We presented MOSS, a source-space approach for controllable multimodal FM policies. Our analysis shows that ideal deterministic FM transport naturally induces mode-dependent noise regions, while practical approximation errors can blur these implicit partitions and lead to mode preference or collapse. MOSS makes this structure explicit by associating semantic behavioral modes with mode-specific source distributions and adaptively adjusting their separation according to local multimodality, without conditioning the shared velocity field on the requested mode. Across simulated and real-world manipulation tasks, MOSS achieves strong mode controllability and task success rate, extends naturally to multiple behavioral modes, and improves minority-mode learning under highly imbalanced demonstrations. Together, these results highlight source-space geometry as an effective design dimension for controllable generative robot policies.

\section{Limitations}

MOSS has two main limitations. First, the current formulation assumes a fixed set of discrete behavioral modes specified before training, and therefore does not directly support open-ended mode control or continuous behavioral variation. Extending MOSS toward dynamic mode discovery and continuous mode spaces remains an important direction for future work. Second, although the source-space formulation is in principle compatible with larger generative robot policies, its performance in such large-scale settings remains to be evaluated.

\newpage

\bibliography{paper}
\bibliographystyle{assets/plainnat}

\newpage
\beginappendix

\section{Visualization of Initial-Noise Mode Separation}
\label{app:fm_noise_separation}

We investigate whether the initial noise of a standard FM policy is predictive of the behavioral mode ultimately executed by the policy. We conduct the experiment on \emph{2D Navigation}, where a DiT-based FM policy can reach the goal through four feasible routes, denoted as Mode 1 -- Mode 4.

We perform 1,000 closed-loop rollouts using the standard Gaussian source distribution $\mathcal{N}(\mathbf{0},\mathbf{I})$. For each rollout, we record the initial noise of the first executed action chunk, which contains $8$ two-dimensional actions. Since route selection is primarily reflected in vertical motion, we retain only the $y$-dimension, resulting in an eight-dimensional noise feature for each rollout.

A rollout is considered successful if its final position is within $0.06$ of the goal at $(1.8,0.5)$. Successful trajectories are assigned mode labels according to their mean vertical coordinate in the wall-crossing region $x\in[0.78,0.85]$:
\[
\begin{aligned}
y < 0.22 &: \quad \text{Mode 1},\\
0.22 \leq y < 0.50 &: \quad \text{Mode 2},\\
0.50 \leq y < 0.78 &: \quad \text{Mode 3},\\
y \geq 0.78 &: \quad \text{Mode 4}.
\end{aligned}
\]
After excluding failed rollouts, 939 trajectories remain, with 188, 242, 267, and 242 samples for Mode 1 -- Mode 4, respectively.

For visualization, the eight-dimensional noise features are standardized and projected to two dimensions using LDA fitted with the trajectory-derived mode labels. LDA identifies directions that maximize between-mode relative to within-mode variation. Each point in Figure~\ref{fig:fm_noise_separation} represents the initial noise of one successful rollout, colored by its resulting behavioral mode, while the ellipses show the nominal $95\%$ Gaussian covariance contours of each mode.

We further quantify the semantic structure of this source-space partition. For each mode, we compute the mean of its initial-noise features. The mean $y$-noise values for Mode 1 -- Mode 4 are $-0.303$, $-0.182$, $0.135$, and $0.286$, respectively, increasing monotonically with the vertical location of the corresponding route. This suggests that the source-space organization learned by standard FM is aligned with the geometry of the resulting behaviors.

Together, these results show that a standard FM policy already develops a mode-relevant organization of its initial-noise space, despite being trained without explicit mode conditioning. However, the substantial overlap between neighboring modes also shows that this naturally emerging partition is insufficient for reliable mode control. This observation motivates MOSS to explicitly shape and separate mode-specific source regions for controllable multimodal generation.

To test whether this structure can directly influence generation, we construct a Gaussian source centered at each mode-specific mean while retaining the standard deviation of the original source, $\sigma=1$, and perform 100 rollouts per source. The resulting mode distributions are reported in Table~\ref{tab:fm_noise_mean_guidance}. Sampling around different mode-specific means induces distinct biases in the generated behaviors. In particular, the sources centered at the Mode 1 and Mode 4 means produce their corresponding modes in $62\%$ and $56\%$ of rollouts, respectively. The intermediate means yield more overlapping mode distributions, indicating that the naturally emerging source partition is semantically structured but not cleanly separated.

\section{Proofs for Section~\ref{sec:analysis}}
\label{app:theory}

% ============================================================
\subsection{Proof of Theorem~\ref{thm:noise_zoning}}
\label{app:noise_zoning}

\begin{proof}
Since $\mathcal{A}_k^{\mathbf{o}}\subset\mathbb{R}^d$ is open and
$\Phi_{\mathbf{o}}:\mathbb{R}^d\rightarrow\mathbb{R}^d$ is continuous,
its preimage
\begin{equation}
\mathcal{Z}_k^{\mathbf{o}}
=
\Phi_{\mathbf{o}}^{-1}
\left(
\mathcal{A}_k^{\mathbf{o}}
\right)
\label{eq:app_noise_zone_preimage}
\end{equation}
is open.

For any $i\neq j$, the semantic regions are pairwise disjoint. Hence
\begin{align}
\mathcal{Z}_i^{\mathbf{o}}
\cap
\mathcal{Z}_j^{\mathbf{o}}
&=
\Phi_{\mathbf{o}}^{-1}
\left(
\mathcal{A}_i^{\mathbf{o}}
\right)
\cap
\Phi_{\mathbf{o}}^{-1}
\left(
\mathcal{A}_j^{\mathbf{o}}
\right)
\nonumber\\
&=
\Phi_{\mathbf{o}}^{-1}
\left(
\mathcal{A}_i^{\mathbf{o}}
\cap
\mathcal{A}_j^{\mathbf{o}}
\right)
=
\varnothing,
\label{eq:app_noise_zone_disjoint}
\end{align}
so the corresponding noise zones are also pairwise disjoint.

\begin{table*}[t]
\vspace{-30pt}
\centering
\captionsetup{justification=centering}
\caption{\textbf{Results when sampling from Gaussian sources centered at the mode-specific initial-noise means of a standard FM policy.} Each row contains 100 rollouts with $\sigma=1$.}
\label{tab:fm_noise_mean_guidance}

\small
\resizebox{0.6\textwidth}{!}{%
\begin{tabular}{c c c c c c c}
\toprule
Target mean & Mode 1 & Mode 2 & Mode 3 & Mode 4 & Stuck & Target mode \\
\midrule
$-0.303$ (Mode 1) & \textbf{62} & 27 & 8  & 2  & 1 & 62\% \\
$-0.182$ (Mode 2) & 52 & \textbf{29} & 9  & 6  & 4 & 29\% \\
$ 0.135$ (Mode 3) & 6  & 15 & \textbf{34} & 43 & 2 & 34\% \\
$ 0.286$ (Mode 4) & 3  & 8  & 29 & \textbf{56} & 4 & 56\% \\
\bottomrule
\end{tabular}%
}

\end{table*}

By the exact transport condition
$(\Phi_{\mathbf{o}})_{\#}\mu_0=p_1^{\mathbf{o}}$,
the definition of pushforward measure gives, for every measurable
$S\subseteq\mathbb{R}^d$,
\begin{equation}
\mu_0
\left(
\Phi_{\mathbf{o}}^{-1}(S)
\right)
=
p_1^{\mathbf{o}}(S).
\label{eq:app_pushforward_identity}
\end{equation}
Taking $S=\mathcal{A}_k^{\mathbf{o}}$ yields
\begin{equation}
\mu_0
\left(
\mathcal{Z}_k^{\mathbf{o}}
\right)
=
p_1^{\mathbf{o}}
\left(
\mathcal{A}_k^{\mathbf{o}}
\right).
\label{eq:app_noise_zone_mass}
\end{equation}
Similarly, taking $S=\mathcal{B}_A^{\mathbf{o}}$ and using
$p_1^{\mathbf{o}}(\mathcal{B}_A^{\mathbf{o}})=0$ gives
\begin{equation}
\mu_0
\left(
\mathcal{B}_Z^{\mathbf{o}}
\right)
=
p_1^{\mathbf{o}}
\left(
\mathcal{B}_A^{\mathbf{o}}
\right)
=
0.
\label{eq:app_noise_boundary_mass}
\end{equation}
This proves Theorem~\ref{thm:noise_zoning}.
\end{proof}

% ============================================================
\subsection{Flow Regularity and Proof of Theorem~\ref{thm:regularity}}
\label{app:margin_separation}

We first justify the bi-Lipschitz property used in
Theorem~\ref{thm:regularity}. Let
$\Phi_{s,t}^{\mathbf{o}}:\mathbb{R}^d\rightarrow\mathbb{R}^d$
denote the FM ODE flow from time $s$ to $t$, with
$\Phi_{\mathbf{o}}=\Phi_{0,1}^{\mathbf{o}}$. Define
\begin{equation}
L_t^{\mathbf{o}}
:=
\sup_{\mathbf{x}\in\mathbb{R}^d}
\left\|
\nabla_{\mathbf{x}}
\mathbf{v}_t^{\mathbf{o}}(\mathbf{x})
\right\|_{\mathrm{op}},
\qquad
\Lambda_{\mathbf{o}}
:=
\int_0^1
L_t^{\mathbf{o}}
\,\mathrm{d}t
<
\infty,
\label{eq:app_global_lipschitz}
\end{equation}
and assume that the ODE is globally well posed on $[0,1]$.

For
$\mathbf{x}_t=\Phi_{0,t}^{\mathbf{o}}(\mathbf{z})$ and
$\mathbf{x}'_t=\Phi_{0,t}^{\mathbf{o}}(\mathbf{z}')$, we have
\begin{equation}
\left\|
\mathbf{v}_t^{\mathbf{o}}(\mathbf{x}_t)
-
\mathbf{v}_t^{\mathbf{o}}(\mathbf{x}'_t)
\right\|_2
\le
L_t^{\mathbf{o}}
\left\|
\mathbf{x}_t-\mathbf{x}'_t
\right\|_2.
\label{eq:app_velocity_lipschitz}
\end{equation}
Gr\"onwall's inequality therefore gives
\begin{equation}
\left\|
\Phi_{\mathbf{o}}(\mathbf{z})
-
\Phi_{\mathbf{o}}(\mathbf{z}')
\right\|_2
\le
e^{\Lambda_{\mathbf{o}}}
\left\|
\mathbf{z}-\mathbf{z}'
\right\|_2.
\label{eq:app_forward_lipschitz}
\end{equation}
Applying the same argument to the backward flow
$\Phi_{1,0}^{\mathbf{o}}=\Phi_{\mathbf{o}}^{-1}$ yields
\begin{equation}
e^{-\Lambda_{\mathbf{o}}}
\left\|
\mathbf{z}-\mathbf{z}'
\right\|_2
\le
\left\|
\Phi_{\mathbf{o}}(\mathbf{z})
-
\Phi_{\mathbf{o}}(\mathbf{z}')
\right\|_2
\le
e^{\Lambda_{\mathbf{o}}}
\left\|
\mathbf{z}-\mathbf{z}'
\right\|_2,
\label{eq:app_bilipschitz}
\end{equation}
which establishes the bi-Lipschitz property in
~\eqref{eq:fm_bilipschitz}.

\paragraph{Condition $\sigma_{\min}>0$.}
For the probability path in
~\eqref{eq:fm_probability_path}, define
\begin{align}
\mathbf{m}_t^{\mathbf{o}}(\mathbf{x})
&:=
\mathbb{E}
\left[
\mathbf{y}
\mid
\mathbf{x}_t=\mathbf{x},\mathbf{o}
\right],
\label{eq:app_conditional_mean}
\\
\boldsymbol{\Sigma}_t^{\mathbf{o}}(\mathbf{x})
&:=
\operatorname{Cov}
\left[
\mathbf{y}
\mid
\mathbf{x}_t=\mathbf{x},\mathbf{o}
\right].
\label{eq:app_conditional_covariance}
\end{align}
Since
$\dot{\sigma}_t=\sigma_{\min}-1$ and
$\sigma_t-t\dot{\sigma}_t=1$,
the population FM velocity satisfies
\begin{equation}
\mathbf{v}_t^{\mathbf{o}}(\mathbf{x})
=
\frac{\dot{\sigma}_t}{\sigma_t}\mathbf{x}
+
\frac{1}{\sigma_t}
\mathbf{m}_t^{\mathbf{o}}(\mathbf{x}),
\label{eq:app_population_velocity}
\end{equation}
with
\begin{equation}
\nabla_{\mathbf{x}}
\mathbf{m}_t^{\mathbf{o}}(\mathbf{x})
=
\frac{t}{\sigma_t^2}
\boldsymbol{\Sigma}_t^{\mathbf{o}}(\mathbf{x}),
\label{eq:app_conditional_mean_gradient}
\end{equation}
and hence
\begin{equation}
\nabla_{\mathbf{x}}
\mathbf{v}_t^{\mathbf{o}}(\mathbf{x})
=
\frac{\dot{\sigma}_t}{\sigma_t}\mathbf{I}_d
+
\frac{t}{\sigma_t^3}
\boldsymbol{\Sigma}_t^{\mathbf{o}}(\mathbf{x}).
\label{eq:app_velocity_jacobian}
\end{equation}

For $\sigma_{\min}>0$, let
$\sigma_*:=\min\{1,\sigma_{\min}\}>0$. If
\begin{equation}
C_{\Sigma,\mathbf{o}}
:=
\sup_{\substack{
t\in[0,1]\\
\mathbf{x}\in\mathbb{R}^d
}}
\left\|
\boldsymbol{\Sigma}_t^{\mathbf{o}}(\mathbf{x})
\right\|_{\mathrm{op}}
<
\infty,
\label{eq:app_covariance_bound}
\end{equation}
then
\begin{equation}
L_t^{\mathbf{o}}
\le
\frac{
\left|\sigma_{\min}-1\right|
}{\sigma_*}
+
\frac{
tC_{\Sigma,\mathbf{o}}
}{\sigma_*^3},
\label{eq:app_positive_sigma_lipschitz}
\end{equation}
and therefore
\begin{equation}
\Lambda_{\mathbf{o}}
\le
\frac{
\left|\sigma_{\min}-1\right|
}{\sigma_*}
+
\frac{
C_{\Sigma,\mathbf{o}}
}{2\sigma_*^3}
<
\infty.
\label{eq:app_positive_sigma_lambda}
\end{equation}
Thus, together with global well-posedness, bounded posterior covariance
is sufficient for the global spatial regularity used above.

\paragraph{Connectedness of a noise zone.}
Since $\Phi_{\mathbf{o}}$ is a homeomorphism,
$\Phi_{\mathbf{o}}^{-1}$ is continuous. If
$\mathcal{A}_k^{\mathbf{o}}$ is connected, then
\begin{equation}
\mathcal{Z}_k^{\mathbf{o}}
=
\Phi_{\mathbf{o}}^{-1}
\left(
\mathcal{A}_k^{\mathbf{o}}
\right)
\label{eq:app_connected_noise_zone}
\end{equation}
is the continuous image of $\mathcal{A}_k^{\mathbf{o}}$ under
$\Phi_{\mathbf{o}}^{-1}$ and is therefore connected.
Its openness follows from continuity of $\Phi_{\mathbf{o}}$.

\paragraph{Same-mode neighborhood.}
Take $\mathbf{z}\in\mathcal{Z}_k^{\mathbf{o}}$ and
$\mathbf{u}\in\mathbb{R}^d$ satisfying
\begin{equation}
\left\|
\mathbf{u}-\mathbf{z}
\right\|_2
<
R_{\mathbf{o}}(\mathbf{z})
=
e^{-\Lambda_{\mathbf{o}}}
\gamma_{\mathbf{o}}(\mathbf{z}).
\label{eq:app_same_mode_source_radius}
\end{equation}
By ~\eqref{eq:fm_bilipschitz},
\begin{equation}
\left\|
\Phi_{\mathbf{o}}(\mathbf{u})
-
\Phi_{\mathbf{o}}(\mathbf{z})
\right\|_2
<
\gamma_{\mathbf{o}}(\mathbf{z}).
\label{eq:app_same_mode_endpoint_radius}
\end{equation}
Since $\gamma_{\mathbf{o}}(\mathbf{z})$ is the distance from
$\Phi_{\mathbf{o}}(\mathbf{z})$ to the semantic boundary
$\mathcal{B}_A^{\mathbf{o}}$,
$\Phi_{\mathbf{o}}(\mathbf{u})$ remains in
$\mathcal{A}_k^{\mathbf{o}}$. Therefore,
\begin{equation}
B
\left(
\mathbf{z},
R_{\mathbf{o}}(\mathbf{z})
\right)
\subseteq
\mathcal{Z}_k^{\mathbf{o}}.
\label{eq:app_same_mode_neighborhood}
\end{equation}

\paragraph{Cross-mode separation.}
Recall that
\begin{equation}
\mathbb{R}^d
=
\mathcal{B}_A^{\mathbf{o}}
\cup
\bigcup_{k=1}^{K}
\mathcal{A}_k^{\mathbf{o}},
\qquad
\mathcal{B}_A^{\mathbf{o}}
\cap
\mathcal{A}_k^{\mathbf{o}}
=
\varnothing,
\label{eq:app_semantic_partition}
\end{equation}
with the $\mathcal{A}_k^{\mathbf{o}}$ pairwise disjoint. Hence any
continuous curve joining two different semantic regions must intersect
$\mathcal{B}_A^{\mathbf{o}}$.

Let
$\mathbf{z}\in\mathcal{Z}_k^{\mathbf{o}}$ and
$\mathbf{z}'\in\mathcal{Z}_j^{\mathbf{o}}$ with $k\neq j$, and define
\begin{equation}
\mathbf{x}
=
\Phi_{\mathbf{o}}(\mathbf{z}),
\qquad
\mathbf{x}'
=
\Phi_{\mathbf{o}}(\mathbf{z}').
\label{eq:app_cross_mode_endpoints}
\end{equation}
The line segment joining $\mathbf{x}$ and $\mathbf{x}'$ intersects
$\mathcal{B}_A^{\mathbf{o}}$ at some $\mathbf{b}$. Therefore,
\begin{align}
\left\|
\mathbf{x}-\mathbf{x}'
\right\|_2
&=
\left\|
\mathbf{x}-\mathbf{b}
\right\|_2
+
\left\|
\mathbf{x}'-\mathbf{b}
\right\|_2
\nonumber\\
&\ge
\gamma_{\mathbf{o}}(\mathbf{z})
+
\gamma_{\mathbf{o}}(\mathbf{z}').
\label{eq:app_endpoint_separation}
\end{align}
Combining this with ~\eqref{eq:fm_bilipschitz} gives
\begin{align}
\left\|
\mathbf{z}-\mathbf{z}'
\right\|_2
&\ge
e^{-\Lambda_{\mathbf{o}}}
\left[
\gamma_{\mathbf{o}}(\mathbf{z})
+
\gamma_{\mathbf{o}}(\mathbf{z}')
\right]
\nonumber\\
&=
R_{\mathbf{o}}(\mathbf{z})
+
R_{\mathbf{o}}(\mathbf{z}').
\label{eq:app_cross_mode_separation}
\end{align}
Together with the same-mode neighborhood result, this proves
Theorem~\ref{thm:regularity}.

% ============================================================
\subsection{Proof of Proposition~\ref{prop:mode_preservation}
and Corollary~\ref{cor:boundary_region}}
\label{app:practical_error}

\begin{proof}[Proof of Proposition~\ref{prop:mode_preservation}]
Let
$\mathbf{z}\in\mathcal{Z}_k^{\mathbf{o}}$ and define
\begin{equation}
\mathbf{x}
:=
\Phi_{\mathbf{o}}(\mathbf{z}),
\qquad
\widetilde{\mathbf{x}}
:=
\widetilde{\Phi}_{\mathbf{o}}(\mathbf{z}),
\label{eq:app_ideal_practical_endpoints}
\end{equation}
where
$\mathbf{x},\widetilde{\mathbf{x}}\in\mathbb{R}^d$.
If
\begin{equation}
\eta_{\mathbf{o}}(\mathbf{z})
<
\gamma_{\mathbf{o}}(\mathbf{z}),
\label{eq:app_error_smaller_than_margin}
\end{equation}
then
\begin{equation}
\left\|
\widetilde{\mathbf{x}}-\mathbf{x}
\right\|_2
<
\operatorname{dist}
\left(
\mathbf{x},
\mathcal{B}_A^{\mathbf{o}}
\right).
\label{eq:app_endpoint_inside_margin}
\end{equation}
Therefore,
$\widetilde{\mathbf{x}}$
lies strictly inside the largest boundary-free ball centered at
$\mathbf{x}$.
Since
$\mathbf{x}\in\mathcal{A}_k^{\mathbf{o}}$,
it follows that
$\widetilde{\mathbf{x}}\in\mathcal{A}_k^{\mathbf{o}}$.
Hence
\begin{equation}
h_{\mathbf{o}}
\left(
\widetilde{\Phi}_{\mathbf{o}}(\mathbf{z})
\right)
=
h_{\mathbf{o}}
\left(
\Phi_{\mathbf{o}}(\mathbf{z})
\right)
=
k.
\label{eq:app_mode_preservation}
\end{equation}
\end{proof}

To formalize the boundary-sensitive region, let
$\epsilon\in\mathbb{R}_{\ge0}$ be a uniform endpoint-error bound and define
\begin{equation}
\mathcal{S}_{\epsilon}^{\mathbf{o}}
:=
\left\{
\mathbf{z}\in\mathbb{R}^d:
\gamma_{\mathbf{o}}(\mathbf{z})
\le
\epsilon
\right\}.
\label{eq:app_sensitive_region}
\end{equation}
Suppose
\begin{equation}
\eta_{\mathbf{o}}(\mathbf{z})
\le
\epsilon.
\label{eq:app_uniform_endpoint_error}
\end{equation}
For every
$\mathbf{z}\notin\mathcal{S}_{\epsilon}^{\mathbf{o}}$,
we have
\begin{equation}
\eta_{\mathbf{o}}(\mathbf{z})
\le
\epsilon
<
\gamma_{\mathbf{o}}(\mathbf{z}),
\label{eq:app_outside_sensitive_region}
\end{equation}
so Proposition~\ref{prop:mode_preservation} guarantees preservation of the
behavioral mode.

Moreover, for any
$\epsilon_1,\epsilon_2\in\mathbb{R}_{\ge0}$ with
$\epsilon_1\le\epsilon_2$,
\begin{equation}
\mathcal{S}_{\epsilon_1}^{\mathbf{o}}
\subseteq
\mathcal{S}_{\epsilon_2}^{\mathbf{o}}.
\label{eq:app_sensitive_region_monotone}
\end{equation}
Thus, as the endpoint-error bound increases, the set of source points for
which mode preservation is not guaranteed expands monotonically. This
establishes Corollary~\ref{cor:boundary_region}.

% ============================================================
\subsection{Local Guarantee for $\sigma_{\min}=0$}
\label{app:sigma_zero}

When $\sigma_{\min}=0$, we have
$\sigma_t=1-t$, so the global sufficient bound in
~\eqref{eq:app_positive_sigma_lipschitz} is no longer available as
$t\rightarrow1$.
We therefore do not assume a global bi-Lipschitz bound in this case.
Instead, the geometric argument in
Theorem~\ref{thm:regularity} admits a local version on any compact source
region for which the velocity Jacobian remains integrable along the
corresponding trajectories.

Let
$\mathcal{K}\subset\mathbb{R}^d$
be a compact set with nonempty interior.
For each $t\in[0,1]$, define
\begin{equation}
\mathcal{C}_t^{\mathbf{o}}(\mathcal{K})
:=
\operatorname{conv}
\left(
\Phi_{0,t}^{\mathbf{o}}(\mathcal{K})
\right)
\subseteq
\mathbb{R}^d,
\label{eq:app_local_convex_hull}
\end{equation}
where $\operatorname{conv}(\cdot)$ denotes the convex hull.
Define the scalar
\begin{equation}
L_{t,\mathcal{K}}^{\mathbf{o}}
:=
\sup_{
\mathbf{x}\in
\mathcal{C}_t^{\mathbf{o}}(\mathcal{K})
}
\left\|
\nabla_{\mathbf{x}}
\mathbf{v}_t^{\mathbf{o}}(\mathbf{x})
\right\|_{\mathrm{op}}
\in\mathbb{R}_{\ge0}.
\label{eq:app_local_lipschitz}
\end{equation}
Assume that the ODE solution exists uniquely for every
$\mathbf{z}\in\mathcal{K}$ over $t\in[0,1]$ and that
\begin{equation}
\Lambda_{\mathbf{o},\mathcal{K}}
:=
\int_0^1
L_{t,\mathcal{K}}^{\mathbf{o}}
\,\mathrm{d}t
<
\infty.
\label{eq:app_local_lambda}
\end{equation}

For any
$\mathbf{z},\mathbf{z}'\in\mathcal{K}$,
the line segment between
$\Phi_{0,t}^{\mathbf{o}}(\mathbf{z})$
and
$\Phi_{0,t}^{\mathbf{o}}(\mathbf{z}')$
is contained in
$\mathcal{C}_t^{\mathbf{o}}(\mathcal{K})$.
The mean-value inequality therefore gives
\begin{align}
&
\left\|
\mathbf{v}_t^{\mathbf{o}}
\left(
\Phi_{0,t}^{\mathbf{o}}(\mathbf{z})
\right)
-
\mathbf{v}_t^{\mathbf{o}}
\left(
\Phi_{0,t}^{\mathbf{o}}(\mathbf{z}')
\right)
\right\|_2
\nonumber\\
&\qquad\le
L_{t,\mathcal{K}}^{\mathbf{o}}
\left\|
\Phi_{0,t}^{\mathbf{o}}(\mathbf{z})
-
\Phi_{0,t}^{\mathbf{o}}(\mathbf{z}')
\right\|_2.
\label{eq:app_local_velocity_lipschitz}
\end{align}
Applying Gr\"onwall's inequality forward and backward along these
trajectories yields
\begin{equation}
e^{-\Lambda_{\mathbf{o},\mathcal{K}}}
\left\|
\mathbf{z}-\mathbf{z}'
\right\|_2
\le
\left\|
\Phi_{\mathbf{o}}(\mathbf{z})
-
\Phi_{\mathbf{o}}(\mathbf{z}')
\right\|_2
\le
e^{\Lambda_{\mathbf{o},\mathcal{K}}}
\left\|
\mathbf{z}-\mathbf{z}'
\right\|_2.
\label{eq:app_local_bilipschitz}
\end{equation}
Hence $\Phi_{\mathbf{o}}$ is bi-Lipschitz when restricted to
$\mathcal{K}$.

For
$\mathbf{z}\in
\mathcal{Z}_k^{\mathbf{o}}\cap\operatorname{int}(\mathcal{K})$,
define the scalar local radius
\begin{equation}
R_{\mathbf{o},\mathcal{K}}(\mathbf{z})
:=
\min
\left\{
e^{-\Lambda_{\mathbf{o},\mathcal{K}}}
\gamma_{\mathbf{o}}(\mathbf{z}),
\;
\operatorname{dist}
\left(
\mathbf{z},
\mathbb{R}^d\setminus\mathcal{K}
\right)
\right\}
\in\mathbb{R}_{\ge0}.
\label{eq:app_local_radius}
\end{equation}
For every
$\mathbf{u}\in
B(\mathbf{z},R_{\mathbf{o},\mathcal{K}}(\mathbf{z}))$,
we have
$\mathbf{u}\in\mathcal{K}$ and, by
~\eqref{eq:app_local_bilipschitz},
\begin{equation}
\left\|
\Phi_{\mathbf{o}}(\mathbf{u})
-
\Phi_{\mathbf{o}}(\mathbf{z})
\right\|_2
<
\gamma_{\mathbf{o}}(\mathbf{z}).
\label{eq:app_local_endpoint_margin}
\end{equation}
Therefore,
\begin{equation}
B
\left(
\mathbf{z},
R_{\mathbf{o},\mathcal{K}}(\mathbf{z})
\right)
\subseteq
\mathcal{Z}_k^{\mathbf{o}}.
\label{eq:app_local_same_mode}
\end{equation}

Similarly, for
$\mathbf{z}\in\mathcal{Z}_k^{\mathbf{o}}\cap\mathcal{K}$ and
$\mathbf{z}'\in\mathcal{Z}_j^{\mathbf{o}}\cap\mathcal{K}$
with $k\neq j$, the endpoint-margin argument in
~\eqref{eq:app_endpoint_separation}, together with
~\eqref{eq:app_local_bilipschitz}, gives
\begin{equation}
\left\|
\mathbf{z}-\mathbf{z}'
\right\|_2
\ge
e^{-\Lambda_{\mathbf{o},\mathcal{K}}}
\left[
\gamma_{\mathbf{o}}(\mathbf{z})
+
\gamma_{\mathbf{o}}(\mathbf{z}')
\right].
\label{eq:app_local_cross_mode}
\end{equation}
Thus, although a global regularity guarantee is not assumed when
$\sigma_{\min}=0$, the same same-mode neighborhood and cross-mode
separation arguments remain valid locally wherever the FM velocity field
has an integrable spatial Jacobian along the relevant trajectories.

\FloatBarrier
% ============================================================
\section{Additional Details of the Modality Strength Estimator}
\label{app:estimator}

% ------------------------------------------------------------
\subsection{Architecture and Hyperparameters}
\label{app:estimator_hyperparameters}

The estimator is a lightweight two-layer MLP with hidden width 256,
GELU activations, and dropout $0.1$, followed by a scalar sigmoid output.
Its input consists of an eight-step observation history and an eight-step
executed-action history. A LayerNorm operation is applied to the flattened
input.

\begin{wraptable}{r}{0.55\textwidth}
    \centering
    \caption{\textbf{Hyperparameters of the Modality Strength Estimator.}}
    \label{tab:estimator_hyperparameters}
    \small
    \setlength{\tabcolsep}{5pt}
    \begin{tabular}{lc}
        \toprule
        Hyperparameter & Value \\
        \midrule
        Observation/action history length & 8 / 8 \\
        Future-action horizon & 8 \\
        KNN projection dimension & 128 \\
        Candidate/retained neighbors & 40 / 20 \\
        MLP hidden dimensions & $256,256$ \\
        Dropout & 0.1 \\
        Optimizer and learning rate & AdamW, $3{\times}10^{-4}$ \\
        Estimator batch size & 128 \\
        Calibration threshold $\epsilon_w$ & $10^{-3}$ \\
        Active-percentile saturation $\eta$ & 0.95 \\
        Persistence floor $q_{\mathrm{floor}}$ & 0.05 \\
        Persistence factor $\lambda_q$ & 0.75 \\
        \bottomrule
    \end{tabular}
    \vspace{-40pt}
    
\end{wraptable}

The estimator is optimized using a confidence-weighted Huber loss,
\begin{equation}
    \mathcal{L}_{\mathrm{est}}
    =
    \frac{
        \sum_i c_i\,
        \ell_{\mathrm{Huber}}(w_i-r_i)
    }{
        \sum_i c_i+\epsilon
    },
    \label{eq:app_estimator_loss}
\end{equation}
where $c_i$ measures the reliability of the local neighborhood.

The principal hyperparameters shared across tasks are summarized in
Table~\ref{tab:estimator_hyperparameters}.

% ------------------------------------------------------------
\subsection{Neighborhood Construction and Supervision}
\label{app:estimator_neighbors}

Each observation is represented by a 512-dimensional frozen ResNet-18 feature concatenated with the task-specific normalized proprioceptive state. The flattened observation history is projected to 128 dimensions.

For each query, we first retrieve $K_{\mathrm{cand}}=40$ candidates using
the projected observation histories and then re-rank them using both
observation and executed-action distances:
\begin{equation}
    d_{ij}
    =
    \frac{d_{ij}^{\mathrm{obs}}}
         {\operatorname{median}(d_{i\cdot}^{\mathrm{obs}})}
    +
    \lambda_h
    \frac{d_{ij}^{\mathrm{act}}}
         {\operatorname{median}(d_{i\cdot}^{\mathrm{act}})},
    \qquad
    \lambda_h=1.
    \label{eq:app_knn}
\end{equation}
The $K_{\mathrm{NN}}=20$ closest candidates are retained. The query itself
and samples from the same episode within eight time steps are excluded to
avoid trivial temporal neighbors. Each retained neighbor is assigned
\begin{equation}
    \alpha_{ij}
    =
    \frac{\exp(-d_{ij}/\tau)}
         {\sum_{\ell\in\mathcal{N}_i}\exp(-d_{i\ell}/\tau)},
    \qquad
    \tau=1.
    \label{eq:app_neighbor_weight}
\end{equation}

Let $\mathbf{y}_j$ denote the normalized future-action chunk of neighbor
$j$. For each mode $k$, we compute its local probability and weighted
future-action mean as
\begin{equation}
    p_{ik}
    =
    \sum_{j\in\mathcal{N}_{ik}}\alpha_{ij},
    \qquad
    \bar{\mathbf{y}}_{ik}
    =
    \frac{1}{p_{ik}}
    \sum_{j\in\mathcal{N}_{ik}}
    \alpha_{ij}\mathbf{y}_j.
    \label{eq:app_local_statistics}
\end{equation}
With
$\bar{\mathbf{y}}_i=\sum_k p_{ik}\bar{\mathbf{y}}_{ik}$,
the between-mode and within-mode variations are
implemented as
\begin{align}
    \mathrm{Var}_{B,i}
    &=
    \sum_k
    p_{ik}
    \left\|
        \bar{\mathbf{y}}_{ik}-\bar{\mathbf{y}}_i
    \right\|_{\mathrm{ms}}^2,
    \\
    \mathrm{Var}_{W,i}
    &=
    \sum_k
    \sum_{j\in\mathcal{N}_{ik}}
    \alpha_{ij}
    \left\|
        \mathbf{y}_j-\bar{\mathbf{y}}_{ik}
    \right\|_{\mathrm{ms}}^2,
    \label{eq:app_variations}
\end{align}

The numerical stabilizer $\delta$ in the main-text supervision target is
set to the 10th percentile of positive within-mode variations on the
training split. The confidence weight $c_i$ combines neighborhood quality,
measured from the retrieval distance, with effective mode support, thereby
suppressing unreliable targets from distant or poorly represented
neighborhoods.

% ------------------------------------------------------------
\subsection{Calibration and Temporal Persistence}
\label{app:estimator_calibration}

Because the numerical range of the estimator output $w$ varies across
tasks, we calibrate it using the empirical distribution of predictions on
the training split. Let
\begin{equation}
    \widehat{F}_{+}(w)
    =
    \frac{
        \sum_{i\in\mathcal{D}_{\mathrm{train}}}
        \mathbbm{1}[\epsilon_w<w_i\leq w]
    }{
        \sum_{i\in\mathcal{D}_{\mathrm{train}}}
        \mathbbm{1}[w_i>\epsilon_w]
    }
    \label{eq:app_active_cdf}
\end{equation}
denote the empirical CDF of active predictions. We use
\begin{equation}
    q(w)
    =
    \begin{cases}
        0,
        & w\leq\epsilon_w,\\[1mm]
        \displaystyle
        \min\left(
            \frac{\widehat{F}_{+}(w)}{\eta},
            1
        \right),
        & w>\epsilon_w,
    \end{cases}
    \label{eq:app_percentile_calibration}
\end{equation}
where $\epsilon_w=10^{-3}$ and $\eta=0.95$. Thus, inactive predictions
remain zero, while the 95th percentile of active training predictions maps
to $q=1$. The calibration is computed solely from the training split and
fixed thereafter.

During rollout, temporal persistence is applied as
\begin{equation}
    q_t^{\mathrm{eff}}
    =
    \max\left(
        q_t,\,
        q_{\mathrm{floor}},\,
        \lambda_q q_{t-1}^{\mathrm{eff}}
    \right),
    \label{eq:app_temporal_persistence}
\end{equation}
where $q_{\mathrm{floor}}=0.05$ and $\lambda_q=0.75$. This prevents a
recently detected multimodal state from immediately losing source
separation at subsequent policy queries. For independently sampled
training windows, we use
$q^{\mathrm{eff}}=\max(q,q_{\mathrm{floor}})$.
For simplicity, we denote $q^{\mathrm{eff}}$ as $q$ when there is no
ambiguity.

% ============================================================
\section{Additional Details of Mode-Oriented Source Shaping}
\label{app:source_shaping}

\subsection{Branch-Critical Region Selection.}
\label{sec:branch_critical_region}

The multimodality supervision $r_i$ defined in
Section~\ref{sec:modality_strength_estimator} is also used offline to identify
the training region where future behaviors are most clearly separated.
Specifically, we rank training windows by $r_i$ and select the region with the
highest multimodality. For each mode $k$, $\bar{\mathbf{a}}_k$ is computed as
the empirical mean future-action chunk within this region. We then center these
means according to ~\eqref{eq:action_mean} and construct $\mathbf{U}$ as an orthonormal basis of the
resulting mode-difference vectors. All statistics are computed solely from the
training demonstrations and are fixed before FM training and deployment.

% ------------------------------------------------------------
\subsection{Adaptive Separation Schedule}
\label{app:adaptive_separation}

We first normalize the active range of the modality strength and apply a
monotone gain
\begin{equation}
    g(q)
    =
    \left[
        \operatorname{clip}
        \left(
            \frac{q-q_{\mathrm{floor}}}
                 {1-q_{\mathrm{floor}}},
            0,
            1
        \right)
    \right]^{\gamma},
    \qquad
    \gamma>0.
    \label{eq:app_gain_schedule}
\end{equation}

The source-center scale in ~\eqref{eq:moss_source} is parameterized as
\begin{equation}
    \alpha(q)
    =
    \alpha_{\min}
    +
    \left(
        \alpha_{\max}-\alpha_{\min}
    \right)g(q),
    \label{eq:app_alpha_schedule}
\end{equation}
and the target separation ratio as
\begin{equation}
    \rho(q)
    =
    \rho_{\min}
    +
    \left(
        \rho_{\max}-\rho_{\min}
    \right)g(q).
    \label{eq:app_rho_schedule}
\end{equation}

Thus, increasing $q$ simultaneously moves the mode-specific source centers
farther apart and increases their target separation relative to their
spread. The corresponding $\sigma_{\parallel}(q)$ is then obtained from
~\eqref{eq:moss_sigma}. The clipping bounds prevent the
source distribution from becoming excessively concentrated or diffuse,
while $\sigma_{\perp}$ remains fixed.

We use the same source-shaping
hyperparameters across all tasks, as summarized in
Table~\ref{tab:source_shaping_hyperparams}.

\begin{table}[h]
    \centering
    \captionsetup{justification=centering}
    \caption{\textbf{Source-shaping hyperparameters used in MOSS.}}
    \label{tab:source_shaping_hyperparams}
    \small
    \setlength{\tabcolsep}{6pt}
    \begin{tabular}{cccccccc}
        \toprule
        $\alpha_{\min}$ &
        $\alpha_{\max}$ &
        $\rho_{\min}$ &
        $\rho_{\max}$ &
        $\sigma_{\perp}$ &
        $\sigma_{\parallel,\min}$ &
        $\sigma_{\parallel,\max}$ &
        $\gamma$ \\
        \midrule
        0.0 & 1.0 & 1.25 & 2.5 & 1.0 & 0.25 & 1.0 & 0.5 \\
        \bottomrule
    \end{tabular}
    
\end{table}

% ------------------------------------------------------------
\subsection{Training and Inference Procedure}
\label{app:moss_training}

Algorithm~\ref{alg:moss} summarizes the complete procedure. The modality strength
estimator and its calibration are fitted before FM training and remain
frozen thereafter. During training, the source distribution is selected
using the demonstration mode label; during deployment, it is selected
using the requested mode.

\section{Detailed Experimental Setup}
\label{app:experiment_setup}

\subsection{Tasks and Mode Definitions}
\label{app:tasks}

We evaluate MOSS on $6$ simulated tasks and $3$ real-world tasks. The $4$ bimodal simulation tasks are \emph{Push Cube},
\emph{Grasp Eyeglasses}, \emph{Stack Cube}, and \emph{Put Wine}.
The $2$ multi-mode tasks are \emph{2D Navigation} and \emph{Place Cube}.
The real-world experiments comprise \emph{Place Cups},
\emph{Pick Veg}, and \emph{Open Drawer}. Mode annotations
identify distinct successful strategies rather than task success itself.

The mode semantics used in the three repeated simulation comparisons and
the real-world experiments are summarized in
Table~\ref{tab:app_mode_semantics}.

\begin{table*}[t]
    \centering

    \captionsetup{justification=centering}
    \caption{\textbf{Semantic definitions of the two behavioral modes used in the
    controlled evaluations.}}
    \label{tab:app_mode_semantics}
    \small
    \begin{tabular}{lll}
        \toprule
        Task & Mode 1 & Mode 2 \\
        \midrule
        Push Cube
        & Push around the left side
        & Push around the right side \\
        Grasp Eyeglasses
        & Positive-$y$ approach
        & Negative-$y$ approach \\
        Stack Cube
        & Cube A on Cube B
        & Cube B on Cube A \\
        \midrule
        Place Cups
        & Grasp the left cup
        & Grasp the right cup \\
        Pick Veg
        & Pick the left radish first
        & Pick the right corn first \\
        Open Drawer
        & Open the lower blue drawer
        & Open the upper green drawer \\
        \bottomrule
    \end{tabular}
    
\end{table*}

\subsection{Baselines}

\paragraph{GMM-Sourced FM.}
We implement a Gaussian-mixture source baseline following the general multimodal source-space formulations explored in prior work~\citep{luo2026mmfm,zhang2026slfm}. The source distribution is modeled as a mixture of Gaussian components whose parameters are estimated from the training data, and samples from different components are transported using a shared flow-matching model. We adapt the source-space construction to our policy architecture and experimental setting while keeping the remaining training configuration consistent with FM-DiT.
For components without an available official implementation, we follow the formulation described in the corresponding paper and adapt it to our policy architecture.

\paragraph{Mode-conditioned FM.}
Our goal is to compare source-space mode control against direct mode conditioning under the same FM policy backbone. PF-DAG (Lei et al., 2026) contains a mode-conditioned action-generation stage, but its complete pipeline additionally predicts the primary mode, addressing the distinct problem of automatically inferring multimodal structure. Since our setting assumes that the desired semantic mode is explicitly specified at deployment, we isolate the mode-conditioned generation mechanism and directly provide the requested mode label to the FM policy. Specifically, the mode label is embedded and injected as an additional condition into the velocity network throughout transport. MOSS and Mode-conditioned FM use the same visual encoder, velocity-field architecture, action horizon, optimizer, training data, and evaluation protocol. The comparison is therefore intended to isolate where the given mode information enters the generative process—through the velocity field or through the source distribution—rather than to compare against the full PF-DAG system.

\begin{algorithm}[t]
\caption{MOSS Training and Mode Intervention}
\label{alg:moss}
\begin{algorithmic}[1]
\Require Labeled demonstrations
$\mathcal{D}=\{(\mathbf{o}_{\le t_i},\mathbf{a}_{<t_i},
\mathbf{y}_i,k_i)\}$,
$k_i\in\{1,\ldots,K\}$
\Ensure Frozen estimator $s_\phi$, calibration map $\mathcal{C}$,
and shared velocity field $\mathbf{v}_\theta$

\Statex \textbf{Stage I: Training Modality Strength Estimator}
\For{each query $i\in\mathcal{D}$}
    \State Retrieve local neighbors using
    $(\mathbf{o}_{\le t_i},\mathbf{a}_{<t_i})$
    \State Compute modality target $r_i$
\EndFor
\State Train
$s_\phi(\mathbf{o}_{\le t_i},\mathbf{a}_{<t_i})\approx r_i$
\State Fit calibration map $\mathcal{C}$ on the training split
\State Freeze $s_\phi$ and $\mathcal{C}$

\Statex \textbf{Stage II: Training FM Policy}
% \State Compute
% $\{\bar{\mathbf{a}}_k\}_{k=1}^{K}$,
% $\{\tilde{\mathbf{a}}_k\}_{k=1}^{K}$,
% and $\mathbf{U}$
\State Identify the branch-critical region from $\{r_i\}$
\State Compute
$\{\bar{\mathbf{a}}_k\}_{k=1}^{K}$,
$\{\tilde{\mathbf{a}}_k\}_{k=1}^{K}$,
and $\mathbf{U}$

\For{each minibatch
$(\mathbf{o}_{\le t},\mathbf{a}_{<t},\mathbf{y},k)$}
    \State
    $q\gets
    \mathcal{C}\!\left(
        s_\phi(\mathbf{o}_{\le t},\mathbf{a}_{<t})
    \right)$
    \State
    $q^{\mathrm{eff}}\gets\max(q,q_{\mathrm{floor}})$
    \State Sample
    $\mathbf{x}_0\sim
    p(\mathbf{x}_0\mid k,q^{\mathrm{eff}})$
    \State Sample $\tau\sim\mathcal{U}(0,1)$ and set
    $\mathbf{x}_\tau=(1-\tau)\mathbf{x}_0+\tau\mathbf{y}$
    \State Update only $\theta$ using the standard FM objective
\EndFor

\Statex \textbf{Stage III: Deployment}
\State Initialize $q_{-1}^{\mathrm{eff}}\gets0$
\For{each policy query $t$}
    \State
    $q_t\gets
    \mathcal{C}\!\left(
        s_\phi(\mathbf{o}_{\le t},\mathbf{a}_{<t})
    \right)$
    \State
    $q_t^{\mathrm{eff}}
    \gets
    \max\left(
        q_t,\,
        q_{\mathrm{floor}},\,
        \lambda_q q_{t-1}^{\mathrm{eff}}
    \right)$
    \State Select the desired mode $k_t$
    \State Sample
    $\mathbf{x}_0\sim
    p(\mathbf{x}_0\mid k_t,q_t^{\mathrm{eff}})$
    \State Integrate
    $\mathbf{v}_\theta(\cdot,\cdot,\mathbf{o}_{\le t})$
    from flow time $0$ to $1$ to obtain
    $\hat{\mathbf{y}}_t$
    \State Execute the current action slice and update
    $\mathbf{a}_{<t+1}$
\EndFor
\end{algorithmic}
\end{algorithm}

\paragraph{Unconditional Baselines.}
FM-DiT~\citep{lipman2023flow,peebles2023scalable},
DDPM-DiT~\citep{ho2020denoising,peebles2023scalable},
ACT~\citep{zhao2023learning},
BET~\citep{shafiullah2022behavior},
A2A~\citep{jia2026a2a}, and
MARS Policy~\citep{jia2026marspolicy}
are reproduced following their original formulations and implementation
settings.

\subsection{Simulation Setup}
\label{app:training_details}

\begin{wraptable}{r}{0.55\textwidth}
    \centering
    \vspace{-20pt}
    \captionsetup{justification=centering}
    \caption{\textbf{Shared configuration of the FM-based policies.}}
    \label{tab:policy_hyperparameters}
    \small
    \begin{tabular}{lc}
        \toprule
        Hyperparameter & Value \\
        \midrule
        Observation history & $8$ steps \\
        Predicted action horizon & $16$ steps \\
        Actions executed per query & $8$ steps \\
        Dropout & $0.1$ \\
        FM integration steps & $10$ \\
        Optimizer & AdamW \\
        Learning rate & $10^{-4}$ \\
        Weight decay & $10^{-6}$ \\
        Batch size & $32$ \\
        Maximum train / validation batches per epoch & $250$ / $250$ \\
        \bottomrule
        \vspace{-40pt}
    \end{tabular}
    
\end{wraptable}

For \emph{Push Cube}, \emph{Grasp Eyeglasses}, and
\emph{Stack Cube}, FM-DiT, Mode-conditioned FM, and MOSS use the same
visual encoder, velocity-field architecture, prediction horizon,
optimizer, and FM solver configuration. Their verified common settings are
listed in Table~\ref{tab:policy_hyperparameters}. Training duration and
checkpoint-selection schedules are kept identical across the compared methods for each task.

\subsection{Real-World Robot Setup}

The real-world experiments use a Galaxea R1 Lite robot. Each policy
receives observations from a head camera and a right-wrist camera together
with the seven-dimensional right-arm state, comprising six joint positions
and the gripper position. Both RGB observations are resized to
$256\times256$. The corresponding action is also seven-dimensional and
specifies the absolute positions of the six right-arm joints and the
gripper.

All three real-world policies use an observation history of $8$ steps and
predict action chunks with a horizon of $16$. Up to $8$ actions from each
chunk are executed before the next policy query. The robot is controlled
in closed loop at $15\,\mathrm{Hz}$.

For \emph{Place Cups}, the demonstrations contain $52$ episodes per
mode after filtering. For \emph{Pick Veg}, the dataset contains $55$
episodes per mode. For \emph{Open Drawer}, the converted dataset
contains $51$ valid episodes per mode after discarding one anomalous
episode. The mode labels are assigned per episode according to the
behavioral choice described in Table~\ref{tab:app_mode_semantics}.

\subsection{Evaluation Protocol and Metrics}
\label{app:metrics}

For simulation, each explicitly requested mode is evaluated with $50$
rollouts. For methods without a mode-selection interface, only task
success and the empirical behavior distribution are reported.

For each real-world mode-controllable policy, we conduct $20$ trials for
each requested mode. The corresponding $\operatorname{GR}_k$ is computed
over the $20$ trials requesting mode $k$, while the reported SR is averaged
over the two requested-mode conditions. FM-DiT is evaluated with $20$
unconditional trials per task. Real-world results are rounded to two
decimal places.

Let $S_i\in\{0,1\}$ indicate whether rollout $i$ completes the task, and
let $M_i$ denote its executed semantic mode. The success rate is
$\operatorname{SR}=\frac{1}{N}\sum_{i=1}^{N} S_i$.
For a requested mode $k$, the guidance rate is
    $\operatorname{GR}_{k}
    =
    \frac{1}{N_k}
    \sum_{i=1}^{N_k}
    \mathbb{I}\!\left[S_i=1 \land M_i=k\right]$.
A rollout contributes to GR only when it both succeeds and realizes the
requested semantic behavior. Consequently,
$\operatorname{GR}_{k}\leq\operatorname{SR}_{k}$. 

% ============================================================
\section{Classifier-Free Guidance for Mode-Conditioned FM}
\label{app:cfg_results}

Mode-conditioned FM uses classifier-free guidance
\begin{equation}
    \mathbf{v}_{\mathrm{CFG}}
    =
    \mathbf{v}_{\mathrm{uncond}}
    +
    s_{\mathrm{CFG}}
    \left(
        \mathbf{v}_{\mathrm{cond}}
        -
        \mathbf{v}_{\mathrm{uncond}}
    \right),
    \label{eq:app_cfg}
\end{equation}
where $s_{\mathrm{CFG}}$ is the inference-time guidance scale. The primary
real-world evaluation uses $s_{\mathrm{CFG}}=1$. We additionally increase
the scale for the requested mode on which the default setting exhibits
mode collapse. Table~\ref{tab:cfg_sweep} reports these targeted
comparisons.

\begin{wraptable}{r}{0.55\textwidth}
    \centering
    \vspace{-12pt}
    \captionsetup{justification=centering}
    \caption{\textbf{CFG comparisons for modes on which
    Mode-conditioned FM collapses under the default guidance scale.}}
    \vspace{-5pt}
    \label{tab:cfg_sweep}
    \small
    \begin{tabular}{llccc}
        \toprule
        Task & Requested mode
        & $s_{\mathrm{CFG}}$ & SR & GR \\
        \midrule
        Place Cups
        & Mode 1 & $1$ & $0.55$ & $0.15$ \\
        & Mode 1 & $3$ & $0.70$ & $0.65$ \\
        \midrule
        Pick Veg
        & Mode 1 & $1$ & $0.60$ & $0.00$ \\
        & Mode 1 & $3$ & $0.45$ & $0.20$ \\
        \midrule
        Open Drawer
        & Mode 2 & $1$ & $0.85$ & $0.00$ \\
        & Mode 2 & $3$ & $0.75$ & $0.25$ \\
        \bottomrule
    \end{tabular}
    
\end{wraptable}

Increasing CFG improves mode-1 guidance on \emph{Place Cups} from
$0.15$ to $0.65$. However, the effect does not generalize consistently.
On \emph{Pick Veg}, increasing the scale raises GR only to
$0.20$ while reducing SR from $0.60$ to $0.45$. Similarly, on
\emph{Open Drawer}, increasing the scale raises mode-2 GR
from $0.00$ to $0.25$ but reduces SR from $0.85$ to $0.75$. These results
indicate that inference-time amplification of the mode condition cannot
reliably resolve the asymmetric mode collapse observed in
Mode-conditioned FM.

% ============================================================
% Imbalanced Training
% ============================================================
\section{Imbalanced Training and Data Efficiency}
\label{app:imbalance_results}

For each task, mode 1 is treated as the minority mode. MOSS and
Mode-conditioned FM are trained on $10$ mode-1 and $90$ mode-2
demonstrations, whereas unconditional FM-DiT is trained only on the
same $10$ mode-1 demonstrations. Evaluation always requests mode 1.

Each method is trained independently. At each checkpoint, we evaluate the requested mode and
report its SR and GR. The solid curves in
Figure~\ref{fig:imbalanced_efficiency} show the mean GR across the $3$
seeds, while the shaded regions indicate $\pm 1$ standard deviation
computed from the seed-wise results.

The horizontal axis reports the cumulative number of sampled minority-mode
training windows rather than the epoch index. This normalization allows
the methods to be compared according to their exposure to target-mode
data, while MOSS can additionally benefit from the majority-mode demonstrations.

\FloatBarrier
% ============================================================
\section{Additional Ablation Studies}
\label{app:ablation}

\subsection{Ablation Variants}
\label{app:ablation_source}

The ablations in Table~\ref{tab:ablation} compare $4$ source-distribution variants to the
source distribution:

\begin{itemize}
    \item \textbf{Single Gaussian:} a standard Gaussian is partitioned by
    the sign of its projection onto the mode axis at a selected action
    position, without displaced Gaussian centers.
    \item \textbf{Single-step separation:} mode-specific Gaussian
    displacement is applied only at the branching point.
    \item \textbf{Without adaptation:} fixed mode-specific source
    separation is applied across the complete action chunk.
    \item \textbf{MOSS:} full-chunk separation is adjusted using the
    estimated modality strength.
\end{itemize}

The single-position variants leave most source dimensions shared, whereas
the full-chunk variants maintain a consistent mode identity across the
predicted action sequence.

\subsection{Fixed versus Adaptive Training Curves}
\label{app:ablation_adaptive}

To compare convergence independently of the final checkpoints reported in
Table~\ref{tab:ablation}, we evaluate fixed full-chunk shaping and adaptive
MOSS every $10$ epochs from epoch $10$ to epoch $100$, as shown in Figure~\ref{fig:app_ablation_efficiency}. Each point contains
$50$ rollouts for each requested mode.

\begin{figure*}[t]
    \centering
    \includegraphics[width=\textwidth]{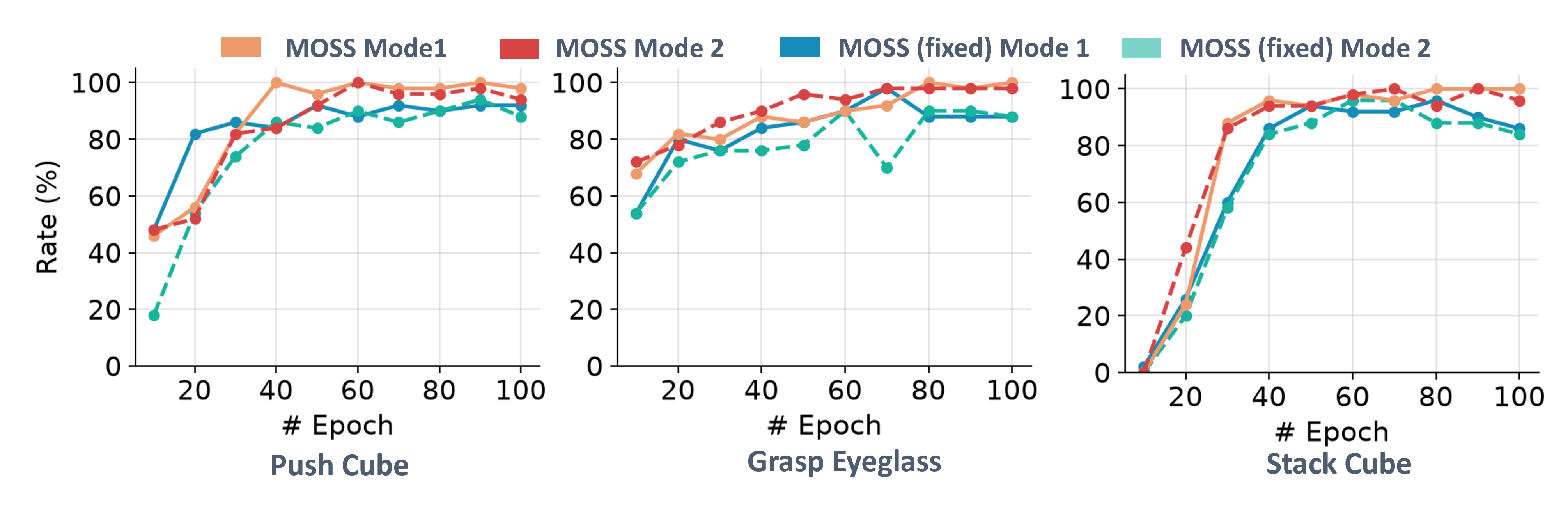}
    \vspace{-20pt}
    \captionsetup{justification=centering}
    \caption{\textbf{Checkpoint-wise GR of fixed
    full-chunk source shaping and adaptive MOSS.}}
    \label{fig:app_ablation_efficiency}
\end{figure*}

% \begin{table}[t]
\begin{wraptable}{r}{0.55\textwidth}
    \vspace{-20pt}
    \centering
    \caption{\textbf{Offline source-alignment and squared-transport statistics.}
    Lower values indicate better alignment or a shorter source-to-target
    transport.}
    \label{tab:app_source_alignment}
    \vspace{-5pt}
    \small
    \setlength{\tabcolsep}{4pt}
    \begin{tabular}{llccc}
        \toprule
        Task & Source
        & $D_{\mu}$ & $D_{\mathrm{tr}}$ & $D_{\parallel}$ \\
        \midrule
        Push Cube
        & Fixed    & $2.997$ & $174.582$ & $2.767$ \\
        & Adaptive & $\mathbf{2.884}$ & $\mathbf{172.823}$ & $\mathbf{0.931}$ \\
        \midrule
        Grasp Eyeglasses
        & Fixed    & $3.032$ & $201.079$ & $4.649$ \\
        & Adaptive & $\mathbf{2.945}$ & $\mathbf{200.983}$ & $\mathbf{4.565}$ \\
        \midrule
        Stack Cube
        & Fixed    & $3.346$ & $209.991$ & $5.884$ \\
        & Adaptive & $\mathbf{3.298}$ & $\mathbf{208.754}$ & $\mathbf{4.683}$ \\
        \bottomrule
    \end{tabular}
    \vspace{-10pt}
% \end{table}
\end{wraptable}

Across the three tasks, adaptive MOSS reaches consistently high guidance
rates for both requested modes and remains stable at later checkpoints.
At epoch $100$, its mean GR across the two modes reaches $0.96$, $0.99$,
and $0.98$ on \emph{Push Cube}, \emph{Grasp Eyeglasses}, and \emph{Stack Cube}, compared with
$0.90$, $0.88$, and $0.85$ for fixed shaping, respectively.
The checkpoint-wise results show that the improvement of adaptive source
shaping persists throughout training rather than arising from a particular
final checkpoint.

\subsection{Source Alignment and Transport Cost}
\label{app:transport_analysis}

We conduct an offline analysis in normalized action space using every
available training window. 

Let $\mathbf{u}$ be the normalized full-chunk mode axis,
$\bar{\mathbf{a}}_k$ the empirical normalized action-chunk mean of mode
$k$, and $\boldsymbol{\mu}_k(q_i)$ the source mean used for window $i$.
We measure center-to-target alignment by
\begin{equation}
    D_{\mu}
    =
    \frac{1}{N}
    \sum_{i=1}^{N}
    \left\|
        \boldsymbol{\mu}_{k_i}(q_i)
        -
        \bar{\mathbf{a}}_{k_i}
    \right\|_2 .
    \label{eq:app_d_mu}
\end{equation}

For a normalized target action chunk $\mathbf{a}_i$ and source sample
$\mathbf{z}_{i,j}$, the transport statistic is the squared displacement
\begin{equation}
    D_{\mathrm{tr}}
    =
    \frac{1}{NM}
    \sum_{i=1}^{N}\sum_{j=1}^{M}
    \left\|
        \mathbf{z}_{i,j}-\mathbf{a}_i
    \right\|_2^2 .
    \label{eq:app_transport_distance}
\end{equation}
Its component along the mode-discriminative direction is
\begin{equation}
    D_{\parallel}
    =
    \frac{1}{NM}
    \sum_{i=1}^{N}\sum_{j=1}^{M}
    \left(
        \left\langle
        \operatorname{vec}(\mathbf{z}_{i,j}-\mathbf{a}_i),
        \operatorname{vec}(\mathbf{u})
        \right\rangle
    \right)^2 ,
    \label{eq:app_transport_parallel}
\end{equation}
and $D_{\perp}=D_{\mathrm{tr}}-D_{\parallel}$.

Adaptive shaping reduces $D_{\mu}$ and total squared transport in all three
tasks. The difference is most visible in $D_{\parallel}$, particularly on
Push Cube and Stack Cube. This is expected because MOSS modifies the
source primarily along the mode-discriminative direction, while the much
larger orthogonal transport component is largely shared by both variants.
\end{document}